\newtheorem{theorem}{Theorem}[section]
\newtheorem{proposition}[theorem]{Proposition}
\renewcommand{\mathbf}{\boldsymbol}
\newcommand{\conv}{\circledast}
\newcommand{\mb}{\mathbf}
\newcommand{\mc}{\mathcal}
\newcommand{\bb}{\mathbb}
\newcommand{\eps}{\varepsilon}
\newcommand{ \Brac }[1]{\left\lbrace #1 \right\rbrace}
\newcommand{ \brac }[1]{\left[ #1 \right]}
\newcommand{ \paren }[1]{ \left( #1 \right) }
\DeclareMathOperator{\diag}{diag}
\DeclareMathOperator{\sign}{sign}
\newcommand{\wh}{\widehat}
\newcommand{\wt}{\widetilde}
\newcommand{\ol}{\overline}
\newcommand{\norm}[2]{\left\| #1 \right\|_{#2}}
\newcommand{\abs}[1]{\left| #1 \right|}
\def \endprf{\hfill {\vrule height6pt width6pt depth0pt}\medskip}
\newenvironment{proof}{\noindent {\bf Proof} }{\endprf\par}
\newcommand{\shift}[2]{{\mathrm{s}_{#2}}\left[#1\right]}
\newcommand{\Req}{\textbf{Require:}\hspace*{0.5em}}
\newcommand{\X}{\hspace*{3mm}}
\newcommand{\cm}[1]{$\triangleright$ #1}
\newcommand{\cmark}{\ding{51}}%
\newcommand{\xmark}{\ding{55}}%
\title{Convolutional Normalization: Improving Deep Convolutional Network Robustness and Training}
\author[$\sharp$]{Sheng Liu\thanks{The first two authors contributed to this work equally.}$^,$}
\author[$\sharp$, $\S$]{Xiao Li}
\author[$\dagger$]{Yuexiang Zhai}
\author[$\dagger$]{Chong You}
\author[$\natural$]{\\Zhihui Zhu}
\author[$\sharp$, $\diamondsuit$]{Carlos Fernandez-Granda}
\author[$\S$]{Qing Qu}
\affil[$\sharp$]{Center for Data Science, New York University}
\affil[$\diamondsuit$]{Courant Institute of Mathematical Sciences, New York University}
\affil[$\dagger$]{Department of EECS, UC Berkeley}
\affil[$\natural$]{Electrical and Computer Engineering, University of Denver}
\affil[$\S$]{Department of EECS, University of Michigan}
\begin{document}

\maketitle

\begin{abstract}
Normalization techniques have become a basic component in modern convolutional neural networks (ConvNets). In particular, many recent works demonstrate that promoting the orthogonality of the weights helps train deep models and improve robustness. For ConvNets, most existing methods are based on penalizing or normalizing weight matrices derived from concatenating or flattening the convolutional kernels. These methods often destroy or ignore the benign convolutional structure of the kernels; therefore, they are often expensive or impractical for deep ConvNets. In contrast, we introduce a simple and efficient ``Convolutional Normalization'' (ConvNorm) method that can fully exploit the convolutional structure in the Fourier domain and serve as a simple plug-and-play module to be conveniently incorporated into any ConvNets. Our method is inspired by recent work on preconditioning methods for convolutional sparse coding and can effectively promote each layer's channel-wise isometry. Furthermore, we show that our ConvNorm can reduce the layerwise spectral norm of the weight matrices and hence improve the Lipschitzness of the network, leading to easier training and improved robustness for deep ConvNets. Applied to classification under noise corruptions and generative adversarial network (GAN), we show that the ConvNorm improves the robustness of common ConvNets such as ResNet and the performance of GAN. We verify our findings via numerical experiments on CIFAR and ImageNet.
\end{abstract}

\section{Introduction}\label{sec:intro}
In the past decade, Convolutional Neural Networks (ConvNets) have achieved phenomenal success in many machine learning and computer vision applications \cite{krizhevsky2012imagenet,simonyan2014very,szegedy2015going,ioffe2015batch,he2016deep,xie2017aggregated,huang2017densely}. Normalization is one of the most important components of modern network architectures \cite{huang2020normalization}. Early normalization techniques, such as batch normalization (BatchNorm) \cite{ioffe2015batch}, are cornerstones for effective training of models beyond a few layers. Since then, the values of normalization for optimization and learning is extensively studied, and many normalization techniques, such as layer normalization \cite{ba2016layer}, instance normalization \cite{ulyanov2016instance}, and group normalization \cite{wu2018group} are proposed. Many of such normalization techniques are based on estimating certain statistics of neuron inputs from training data. However, precise estimations of the statistics may not always be possible. For example, BatchNorm becomes ineffective when the batch size is small \cite{he2017mask}, or batch samples are statistically dependent \cite{sun2020test}.

Weight normalization \cite{salimans2016weight} is a powerful alternative to BatchNorm that improves the conditioning of neural network training without the need to estimate statistics from neuron inputs. Weight normalization operates by either reparameterizing or regularizing the network weights so that all the weights have unit Euclidean norm.  Since then, various forms of normalization for network weights are proposed and become critical for many tasks such as training Generative Adversarial Networks (GANs) \cite{miyato2018spectral} and obtaining robustness to input perturbations \cite{araujo2021lipschitz, qian2018l2}. One of the most popular forms of weight normalization is enforcing orthogonality, which has drawn attention from a diverse range of research topics. The idea is that weights in each layer should be orthogonal and energy-preserving. Orthogonality is argued to play a central role for training ultra-deep models \cite{saxe2013exact,pennington2018emergence,xiao2018dynamical,hu2020provable,qi2020deep}, optimizing recurrent models \cite{arjovsky2016unitary,vorontsov2017orthogonality,helfrich2018orthogonal,lezcano2019cheap}, improving generalization \cite{jia2019orthogonal}, obtaining robustness \cite{cisse2017parseval,trockman2021orthogonalizing}, learning disentangled features \cite{liu2020oogan,ye2020network}, improving the quality of GANs \cite{brock2018large,odena2018generator}, learning low-dimensional embedding \cite{atzmon2020isometric}, etc. 

\begin{figure}[t]
	\centering
    \includegraphics[width = 0.75\linewidth]{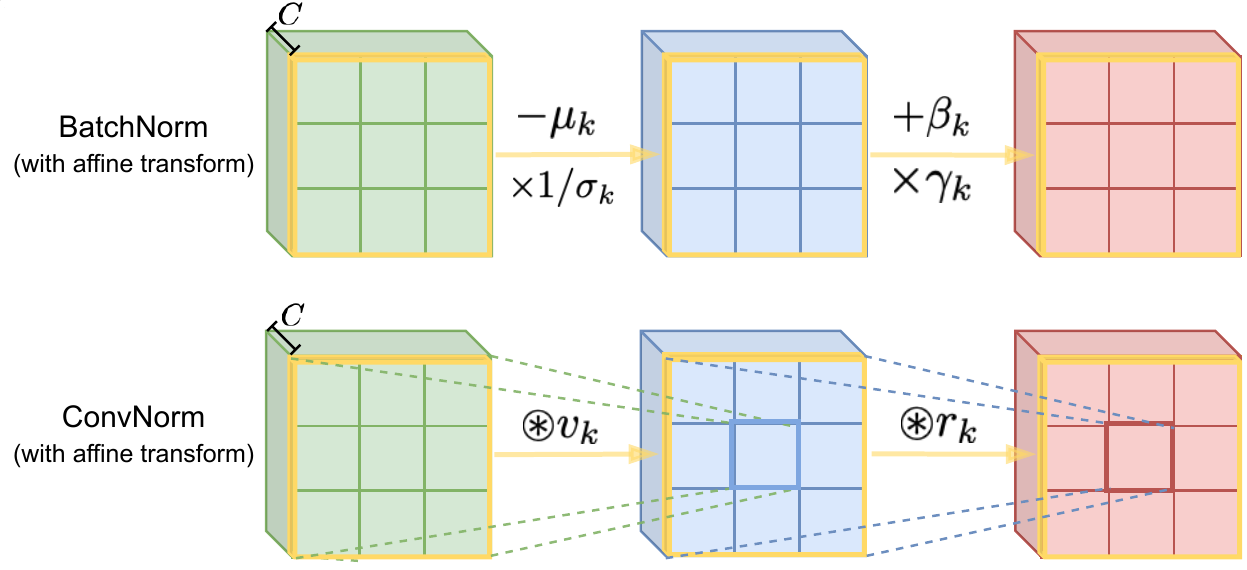}
	\caption{\textbf{Comparison between BatchNorm and ConvNorm} on activations of $k=1,\dots,C$ channels. BatchNorm subtracts and multiplies the activations of each channel by computed scalars: mean $\mu$ and variance $\sigma^2$, before a per-channel affine transform parameterized by learned parameters $\beta$ and $\gamma$; ConvNorm performs per-channel convolution with precomputed kernel $v$ to normalize the spectrum of the weight matrix for the convolution layer, following with a channel-wise convolution with learned kernel $r$ as the affine transform.}
	\label{fig:ConvNorm}
\end{figure}

\paragraph{Exploiting convolution structures for normalization.} 
Our work is motivated by the pivotal role of weight normalization in deep learning. In the context of ConvNets, the network weights are multi-dimensional (e.g., 4-dimensional for a 2D ConvNet) convolutional kernels. A vast majority of existing literature \cite{harandi2016generalized,cisse2017parseval,bansal2018can,zhang2019approximated,jia2019orthogonal,Li2020Efficient,huang2020controllable} imposes orthogonal weight regularization for ConvNets by treating multi-dimensional convolutional kernels as 2D matrices (e.g., by flattening certain dimensions) and imposing orthogonality of the matrix. However, this choice ignores the translation-invariance properties of convolutional operators and, as shown in \cite{qi2020deep}, does not guarantee energy preservation. On the other hand, these methods often involve dealing with matrix inversions that are computationally expensive for deep and highly overparameterized networks. 

In contrast, in this work we introduce a new normalization method dedicated to ConvNets, which explicitly exploits translation-invariance properties of convolutional operators. Therefore, we term our method as \emph{Convolutional Normalization} (ConvNorm). We normalize each output channel for each layer of ConvNets, similar to recent preconditioning methods for convolutional sparse coding \cite{qu2020geometric}. The ConvNorm can be viewed as a reparameterization approach for the kernels, that actually it normalizes the weight of each channel to be tight frame.\footnote{Tight frame can be viewed as a generalization of orthogonality for overcomplete matrices, which is also energy preserving.}  While extra mathematical hassles do exist in incorporating translation-invariance properties, and it turns out to be a blessing, rather than a curse, in terms of computation, as it allows us to carry out the inversion operation in our ConvNorm via fast Fourier transform (FFT) in the frequency domain, for which the computation complexity can be significantly reduced.

\paragraph{Highlights of our method.} 
In summary, for ConvNets our approach enjoys several clear advantages over classical normalization methods \cite{huang2017orthogonal, li2019preventing, wang2019orthogonal}, that we list below:

\begin{itemize}[leftmargin=*]
    
    \item \textbf{Easy to implement.} In contrast to weight regularization methods that often require hyperparameter tuning and heavy computation \cite{huang2017orthogonal,wang2019orthogonal}, the ConvNorm has no parameter to tune and is efficient to compute. Moreover, the ConvNorm can serve as a simple plug-and-play module that can be conveniently incorporated into training almost any ConvNets. 

    \item \textbf{Improving network robustness.} Although the ConvNorm operates on each output channel separately, we show that it actually improves the overall layer-wise Lipschitzness of the ConvNets. Therefore, as demonstrated by our experiments, it has superior robustness performance against noise corruptions and adversarial attacks.

    \item \textbf{Improving network training.} We numerically demonstrate that the ConvNorm accelerates training on standard image datasets such as CIFAR~\cite{krizhevsky2009learning} and ImageNet~\cite{russakovsky2015imagenet}. Inspired by the work \cite{qu2019nonconvex,qu2020geometric}, our high-level intuition is that the ConvNorm improves the optimization landscape that optimization algorithms converge faster to the desired solutions.
        
\end{itemize}

\paragraph{Related work.} Besides our work, a few very recent work also exploits the translation-invariance for designing the normalization techniques of ConvNets. We summarize and explain the difference with our method below.

\begin{itemize}[leftmargin=*]
    \item The work \cite{wang2019orthogonal,qi2020deep} derived a similar notion of orthogonality for convolutional kernels, and adopted a penalty based method to enforce orthogonality for network weights. These penalty methods often require careful tuning of the strength of the penalty on a case-by-case basis. In contrast, our method is parameter-free and thus easier to use. Our method also shows better empirical performance in terms of robustness.

    \item Very recent work by \cite{trockman2021orthogonalizing} presented a method to enforce \emph{strict} orthogonality of convolutional weights by using Cayley transform. Like our approach, a sub-step of their method utilizes the idea of performing the computation in the Fourier domain. However, as they normalize the whole unstructured weight matrix, computing expensive matrix inversion is inevitable, so that their running time and memory consumption is prohibitive for large networks.\footnote{In \cite{trockman2021orthogonalizing}, the results are reported based on ResNet9, whereas our method can be easily added to larger networks, e.g. ResNet18 and ResNet50.} In contrast, our method is ``orthogonalizing'' the weight of each channel instead of the whole layer, so that we can exploit the convolutional structure to avoid expensive matrix inversion with a much lower computational burden. In the meanwhile, we show that this channel-wise normalization can still improve layer-wise Lipschitz condition.
\end{itemize}

\paragraph{Organizations.} The rest of our paper is organized as follows. In \Cref{sec:cnn_review}, we introduce the basic notations and provide a brief overview of ConvNets. In \Cref{sec:iso-bn}, we introduce the design of the proposed ConvNorm and discuss the key intuitions and advantages. In \Cref{sec:exp}, we perform extensive experiments on various applications verifying the effectiveness of the proposed method. Finally, we conclude and point to some interesting future directions in \Cref{sec:conclusion}. To streamline our presentation, some technical details are deferred to the Appendices.
The code of implementing our ConvNorm can be found online:
\begin{center}
    \url{https://github.com/shengliu66/ConvNorm}.
\end{center}

\section{Preliminary}\label{sec:cnn_review}

\paragraph{Review of deep networks.} A deep network is essentially a \emph{nonlinear} mapping $f(\cdot): \mb x \mapsto \mb y$, which can be modeled by a composition of a series of simple maps: $f(\mb x) = f^{L-1} \circ \cdots \circ f^1\circ f^0(\mb x)$, where every $f^\ell(\cdot)\;(1\leq \ell \leq L)$ is called one ``layer''. Each layer is composed of a linear transform, followed by a simple nonlinear activation function $\varphi(\cdot)$.\footnote{The nonlinearity could contain BatchNorm \cite{ioffe2015batch}, pooling, dropout \cite{srivastava2014dropout}, and stride, etc.} More precisely, a basic deep network of $L$ layers can be defined recursively by interleaving linear and nonlinear activation layers as

\begin{align}\label{eqn:deep-network-layer}
    \mb z^{\ell+1} = f^\ell(\mb z^\ell) = \varphi \circ \mc A^\ell (\mb z^\ell) 
\end{align}
for $\ell = 0,1,\ldots,L-1,$ with $\mb z_0 = \mb x$. Here $\mc A^\ell(\cdot)$ denotes the linear transform and will be described in detail soon. For convenience, let us use $\mb \theta$ to denote all network parameters in $\Brac{\mc A^\ell(\cdot) }_{\ell=0}^{L-1}$. The goal of deep learning is to fit the observation $\mb y$ with the output $f(\mb x, \mb \theta)$ for any sample $\mb x$ from a distribution $\mc D$, by learning $\mb \theta$. This can be achieved by optimizing a certain loss function $\ell(\cdot)$, i.e., 

\begin{align*}
     \min_{\mb \theta \in \mb \Theta} \;L(\mb \theta; \Brac{\paren{\mb x^i,\mb y^i} }_{i=1}^m) \;:=\; \frac{1}{m} \sum_{i=1}^m \ell\paren{ f(\mb x^i,\mb \theta), \mb y^i },
\end{align*}
given a (large) training dataset $\Brac{\paren{\mb x^i,\mb y^i} }_{i=1}^m$. For example, for a typical classification task, the class label of a sample $\mb x$ is represented by a one-hot vector $\mb y \in \mathbb{R}^k$ representing its membership in $k$ classes. The loss can be chosen to be either the cross-entropy or $\ell_2$-loss \cite{janocha2017loss}. In the following, we use $(\mb x,\mb y)$ to present one training sample.


\paragraph{An overview of ConvNets.} The ConvNet \cite{lecun1998gradient} is a special deep network architecture, where each of its linear layer can be implemented much more efficiently via convolutions in comparison to fully connected networks \cite{lecun2015deep}. Because of its efficiency and popularity in machine learning, for the rest of the paper, we focus on ConvNets. Suppose the input data $\mb x$ has $C$ channels, represented as

\begin{align}\label{eqn:input-X}
    \mb x \;=\; \paren{ \mb x_{1},  \mb x_{2},\cdots, \mb x_{C}  },
\end{align}
where for 1D signal $\mb x_{k} \in \bb R^m $ denotes the $k$th channel feature of $\mb x$.\footnote{If the data is 2D, we can assume $\mb x \in \bb R^{m_1 \times m_2}$. For simplicity, we present our idea based on 1D signal.} For the $\ell$th layer $(0\leq \ell \leq L-1)$ of ConvNets, the linear operator $\mc A^\ell(\cdot): \bb R^{C_{\ell} \times m} \mapsto \bb R^{ C_{\ell+1} \times m }$ in \eqref{eqn:deep-network-layer} is a convolution operation with $C_{\ell+1}$ output channels,

\begin{align*}
    \mb z^{\ell+1} \;&=\; \paren{ \mb z_{1}^{\ell+1}, \mb z_{2}^{\ell+1}, \cdots, \mb z_{C_{\ell+1}}^{\ell+1} }, \\
     \mb z_{k}^{\ell+1} \;&=\; \varphi\paren{ \sum_{j=1}^{C_{\ell}} \mb a_{kj}^\ell \ast \mb z_{j}^\ell } \quad (1\leq k \leq C_{\ell+1} ),
\end{align*}
where $\ast$ denotes the convolution between two items that we will discuss below in more detail. Thus, for the $\ell$th layer with $C_{\ell}$ input channels and $C_{\ell+1}$ output channels, we can organize the convolution kernels $\Brac{\mb a_{kj} }$ as

\begin{align*}
    \mb A^\ell \;=\; \begin{bmatrix}
      \mb a_{11}^\ell & \mb a_{12}^\ell & \cdots & \mb a_{1C_{\ell}}^\ell \\
    \mb a_{21}^\ell & \mb a_{22}^\ell &  \cdots & \mb a_{2 C_{\ell} }^\ell \\
    \vdots  & \vdots  & \ddots & \vdots \\
     \mb a_{C_{\ell+1} 1 }^\ell &  \mb a_{C_{\ell+1} 2 }^\ell &  \cdots  & \mb a_{C_{\ell+1} C_{\ell}  }^\ell
    \end{bmatrix}. 
\end{align*}

\paragraph{Convolution operators.} For the simplicity of presentation and analysis, we adopt \emph{circular} convolution instead of linear convolution.\footnote{Although there are slight differences between linear and circulant convolutions on the boundaries, actually any linear convolution can be \emph{reduced} to circular convolution simply via zero-padding.} For 1D signal, given a kernel $\mb a \in \bb R^n $ and an input signal $\mb x \in \bb R^m$ (in many cases $m\gg n$), a circular convolution $\ast$ between $\mb a$ and $\mb x$ can be written in a simple matrix-vector product form via

\begin{align*}
    \mb y \;=\; \mb a \ast \mb x \;=\; \mb C_{ \mb a } \cdot  \mb x,
\end{align*}
where $\mb C_{\mb a}$ denotes a circulant matrix of (zero-padded) $\mb a$,

\begin{align*}
    \mb C_{\mb a} \;:=\; \begin{bmatrix}
    \shift{\mb a}{0} & \shift{\mb a}{1} & \cdots & \shift{\mb a}{m-1}
    \end{bmatrix},
\end{align*}
which is the concatenation of all cyclic shifts $\shift{\mb a}{k}\;(0\leq k \leq m-1)$ of length $k$ of the (zero-padded) vector $\mb a$. Since $\mb C_{ \mb a }$ can be decomposed via the discrete Fourier transform (DFT) matrix $\mb F$:

\begin{align}\label{eqn:circ-decomp}
    \mb C_{\mb a} \;=\; \mb F^* \diag\paren{ \wh{\mb a} } \mb F, \quad \wh{\mb a}\;=\; \mb F\mb a,
\end{align}
where $\wh{\mb a}$ denotes the Fourier transform of a vector $\mb a$. The computation of $\mb a \ast \mb x$ can be carried out efficiently via fast Fourier transform (FFT) in the frequency domain. We refer the readers to the appendix for more technical details.

\section{Convolutional Normalization}\label{sec:iso-bn}
In the following, we introduce the proposed ConvNorm, that can fully exploit benign convolution structures of ConvNets. It can be efficiently implemented in the frequency domain, and reduce the layer-wise Lipschitz constant. First of all, we build intuitions of the new design from the simplest setting. From this, we show how to expand the idea to practical ConvNets and discuss its advantages for training and robustness.

\subsection{A warm-up study}\label{subsec:warm-up}
\vspace{-0.05in}
Let us build some intuitions by zooming into one layer of ConvNets with both input and output being \emph{single-channel}, 
\vspace{-0.05in}
\begin{align}\label{eqn:single-kernel-l-layer}
    \mb z_{out} \;= \; \mc A_L (\mb z) \;=\;  \mb a  \ast \mb z_{in}, 
\end{align}
where $\mb z_{in}$ is the input signal, $\mb a $ is a single kernel, and $\mb z_{out}$ denotes the output before the nonlinear activation. The form \eqref{eqn:single-kernel-l-layer} is closely related to recent work on blind deconvolution \cite{qu2019nonconvex}. More specifically, the work showed that normalizing the output $\mb z_{out}$ via \emph{preconditioning} eliminates bad local minimizers and dramatically improves the optimization landscapes for learning the kernel $\mb a$. The basic idea is to multiply a preconditioning matrix which approximates the following form\footnote{In the work \cite{qu2019nonconvex}, they cook up a matrix by using output samples $\wt{\mb P} = \paren{ \frac{C}{m} \sum_{i=1}^m \mb C_{\mb z_{out}^i} (\mb C_{\mb z_{out}^i})^\top   }^{-1/2}$. When the input samples $\mb z_{in}^i$ are i.i.d. zero mean, it can be showed that $\wt{\mb P} \approx \mb P$ for large $m$. For ConvNets, we can just use the learned kernel $\mb a$ for cooking up $\mb P$.}
\vspace{-0.05in}
\begin{align}
    \mb P \;=\; \paren{ \mb C_{\mb a} \mb C_{\mb a}^\top  }^{-1/2}.
\label{eq:p-matrix}
\end{align}
\vspace{-0.05in}
As we observe
\vspace{-0.02in}
\begin{align*}
    \wt{\mb z}_{out}\;=\; \mb P \mb z_{out} \;=\; \underbrace{\paren{ \mb C_{\mb a} \mb C_{\mb a}^\top  }^{-1/2} \mb C_{\mb a} }_{\mb Q(\mb a)} \cdot \mb z_{in},
\end{align*}
the ConvNorm is essentially \emph{reparametrizing} the circulant matrix $\mb C_{\mb a}$ of the kernel $\mb a$ to an \emph{orthogonal} circulant matrix $\mb Q(\mb a)= \paren{ \mb C_{\mb a} \mb C_{\mb a}^\top  }^{-1/2} \mb C_{\mb a}$, with $\mb Q \mb Q^\top = \mb I$. Thus, the ConvNorm is improving the conditioning of the \emph{vanilla} problem and reducing the Lipschitz constant of the operator $\mc A_L(\cdot)$ in \eqref{eqn:single-kernel-l-layer}. On the other hand, the benefits of this normalization can also be observed in the frequency domain. Based on \eqref{eqn:circ-decomp}, we have $ \mb P = \mb F^* \diag\paren{ \mb v } \mb F = \mb C_{\mb v} $ with $\mb v = \mb F^{-1}\paren{ \abs{\wh{\mb a}}^{\odot-1} }$. Thus, we also have
\vspace{-0.05in}
\begin{align*}
    \mb Q(\mb a) = \mb C_{\mb v} \cdot \mb C_{\mb a}  \;=\; \mb C_{\mb v \ast \mb a} = \mb F^* \diag( \wh{ \mb g}(\mb a)  ) \mb F, \quad  \wh{ \mb g}(\mb a)  \;=\; \wh{\mb a} \odot \abs{ \wh{\mb a} }^{\odot-1},
\end{align*}
with $\odot$ denoting entrywise operation and $\mb g = \mb F^{-1}\paren{\wh{\mb a} \odot \abs{ \wh{\mb a} }^{\odot-1}}$. Thus, we can see that:
\vspace{-0.12in}
\begin{itemize}[leftmargin=*]
    \item Although the reparameterization involves matrix inversion, which is typically expensive to compute, for convolution it can actually be much more efficiently implemented in the frequency domain via FFT, reducing the complexity from $O(n^3)$ to $O(n\log n)$.
    \vspace{-0.03in}
    \item The reparametrized kernel $\mb g$ is effectively an \emph{all-pass} filter with flat normalized spectrum $\wh{\mb a} \odot \abs{ \wh{\mb a} }^{\odot-1}$.\footnote{An all-pass filter is a signal processing filter that passes all frequencies equally in gain, but can change the phase relationship among various frequencies.}  From an information theory perspective, this implies that it can better preserve (in particular, high-frequency) information of the input feature from the previous layer. 
\end{itemize}

\subsection{ConvNorm for multiple channels}

So far, we only considered one layer ConvNets with single-channel input and output. However, recall from \Cref{sec:cnn_review}, modern deep ConvNets are usually designed with many layers; each typical layer is constructed with a linear transformation with \emph{multiple} input and output channels, followed by strides, normalization, and nonlinear activation. Extension of the normalization approach in \Cref{subsec:warm-up} from one layer to multiple layers is easy, which can be done by applying the same normalization repetitively for all the layers. However, generalizing our method from a single channel to multiple channels is not obvious, that we discuss below.

In \cite{qu2020geometric}, the work introduced a preconditioning method for normalizing multiple kernels in convolutional sparse coding. In the following, we show that such an idea can be adapted to normalize each output channel, reduce the Lipschitz constant of the weight matrix in each layer, and improve training and network robustness. Let us consider any layer $\ell$ ($1\leq \ell \leq L$) within a \emph{vanilla} ConvNet using $1$-stride, and take one channel (e.g., $k$-th channel) of that layer as an example. For simplicity of presentation, we hide the layer number $\ell$. Given $\mb z_{k,out} = \sum_{j=1}^{C_I}\mb a_{kj} \ast \mb z_{j,in}$, the $k$-th output channel can be written as
\vspace{-0.1in}
\begin{align*}
    \mb z_{k,out} \;=\; \underbrace{ \begin{bmatrix}
    \mb C_{\mb a_{k1}} & \mb C_{\mb a_{k2} } & \cdots &  \mb C_{\mb a_{kC_I } }
    \end{bmatrix} }_{ \mb A_k } \cdot \underbrace{ 
    \begin{bmatrix}
        {\mb z}_{1,in}  \\
        {\mb z}_{2,in} \\
       \vdots \\
        {\mb z}_{C_{I},in} 
    \end{bmatrix} }_{  \mb z_{in} },
\end{align*}
with $C_{I}$ and $C_{O}$ being the numbers of input and output channels, respectively. For each channel $k=1,\cdots, C_O$, we normalize the output by 
\begin{align}\label{eqn:Conv-Norm-k}
  \boxed{ \mb P_k \;=\; \paren{\sum_{j=1}^{C_I} \mb C_{ \mb a_{kj} } \mb C_{\mb a_{kj}}^\top }^{-1/2} = \paren{ \mb A_k \mb A_k^\top }^{-1/2},}
\end{align}
    \vspace{-2mm}
so that
\begin{align}\label{eqn:precond-multi}
    \wt{\mb z}_{k,out} \;=\; \mb P_k \mb z_{k,out} \;=\; \underbrace{ \paren{ \mb A_k \mb A_k^\top }^{-1/2} \mb A_k}_{\mb Q_k (\mb A_k) } \cdot \mb z_{in}.
\end{align}
Thus, we can see the ConvNorm is essentially a reparameterization of the kernels $\Brac{\mb a_{kj}}_{j=1}^{C_I}$ for the $k$-th channel. Similar to \Cref{subsec:warm-up}, the operation can be rewritten in the form of convolutions 
\begin{align*}
    \mb Q_k(\mb A_k) \;=\; \mb P_k \mb A_k \;=\; \begin{bmatrix}
        \mb C_{\mb v_k \ast \mb a_{k1} } & \cdots & \mb C_{\mb v_k \ast \mb a_{kC_I} }  
    \end{bmatrix}
\end{align*}
with $\mb P_k = \mb C_{\mb v_k}$ and $\mb v_k = \mb F^{-1} \paren{ \sum_{i=1}^{C_I}  \abs{ \wh{\mb a}_{ki} }^{\odot 2}   }^{\odot -1/2}$; it can be efficiently implemented via FFT.

Here, as for multiple kernels the matrix $\mb A_k$ is \emph{overcomplete} (i.e., $\mb A_k$ is a wide rectangular matrix), we \emph{cannot} normalize the channel-wise weight matrix $\mb A_k$ to exact orthogonal. However, it can be normalized to \emph{tight frame} with $\mb Q_k \mb Q_k^\top \;=\; \mb I $. This further implies that we can normalize the spectral norm $\norm{\mb Q_k}{}$ of the weight matrix $\mb Q_k$ in each channel to unity (see \Cref{fig:Lipschitz} (Left)).

\begin{figure}[t]
	\centering
	\centering
    \includegraphics[width = 0.35\textwidth, ]{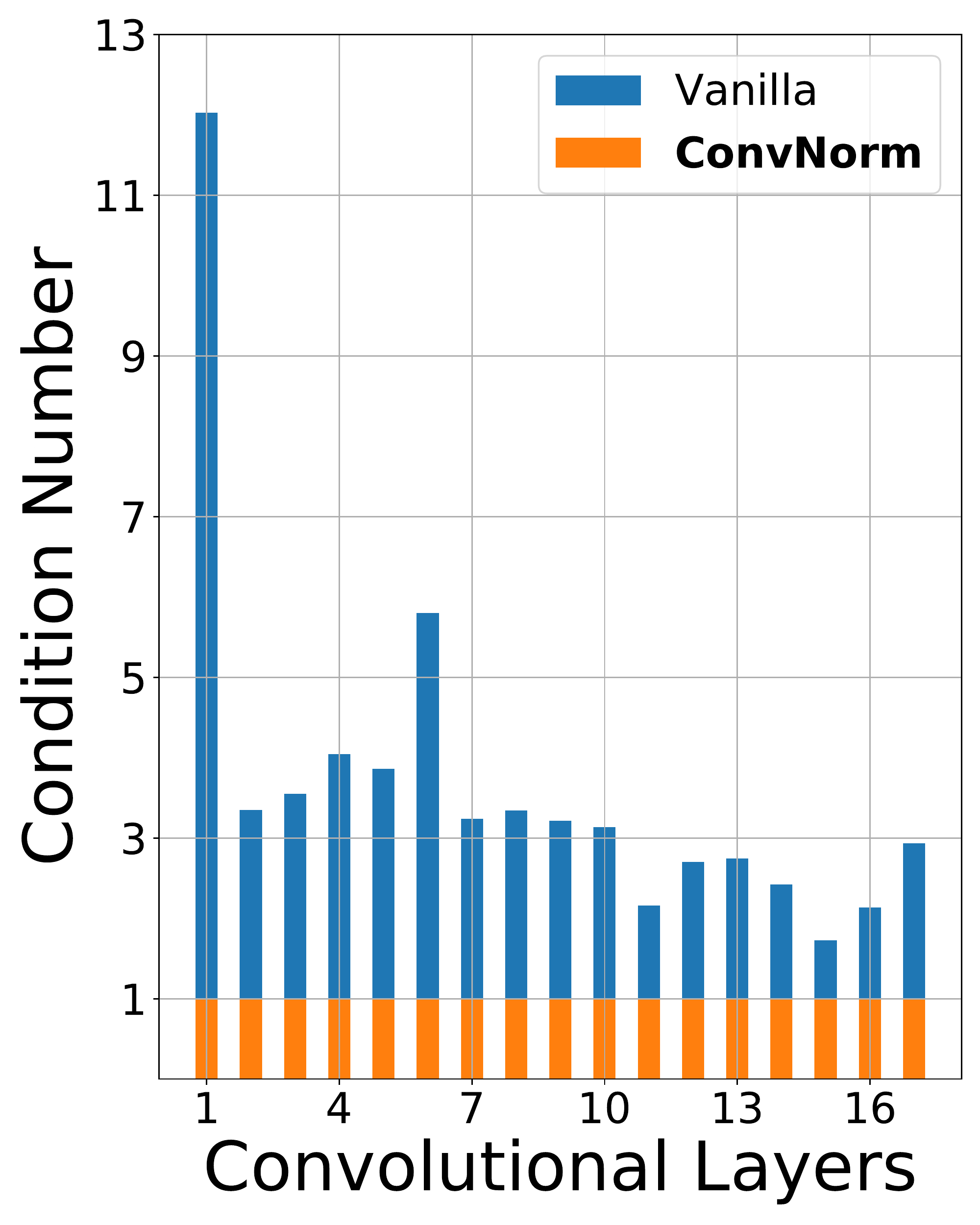}
    \includegraphics[width = 0.35\textwidth]{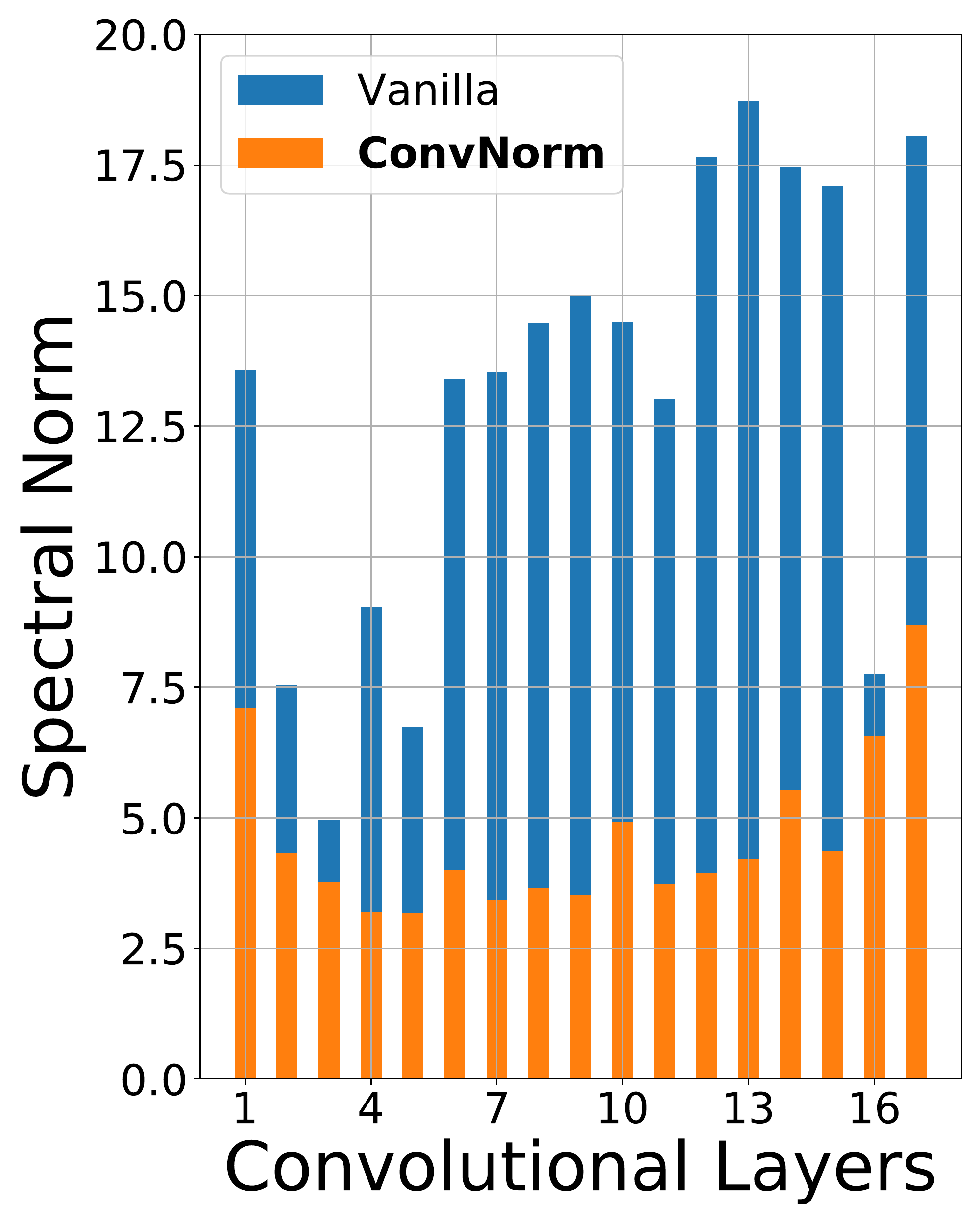}
	\caption{\textbf{Condition number for each channel (averaged over all channels) (Left), and spectral norm for each layer (Right)} on ResNet18 except for skip connection layers. ConvNorm normalizes the channel-wise condition number to 1 and reduces the layer-wise spectral norm. We use the method in \cite{sedghi2018singular} to calculate the singular values of the weight matrix.}
	\label{fig:Lipschitz}
\end{figure}

Combining the operation for all the channels, the ConvNorm for each layer overall can be summarized as follows:
\begin{align}\label{eqn:layer-wise-normalization}
    \wt{\mb z}_{out} \;=\; \begin{bmatrix}
        \mb P_1 \mb z_{1,out} \\
        \vdots \\
        \mb P_{C_O} \mb z_{C_O,out}
    \end{bmatrix}\;=\; \underbrace{ \begin{bmatrix}
        \mb Q_1 \\
        \vdots \\
        \mb Q_{C_O}
    \end{bmatrix} }_{\mb Q} \mb z_{in},
\end{align}

\vspace{-0.1in}
that we normalize each output channel $k$ by different matrix $\mb P_k$. The proposed ConvNorm has several advantages that we discuss below.

\vspace{-0.1in}
\begin{proposition}\label{prop:norm-bound}
The spectral norm of $\mb Q$ introduced in \eqref{eqn:layer-wise-normalization} can be bounded by
\vspace{-0.05in}
\begin{align*}
    \norm{\mb Q}{} \;\leq\; \sqrt{ \sum_{k=1}^{C_O} \norm{\mb Q_k}{}^2 },
\end{align*}
that spectral norm of $\mb Q$ is bounded by the spectral norms of all the weights $\Brac{\mb Q_k}_{k=1}^{C_O}$. 
\end{proposition}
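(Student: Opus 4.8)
The plan is to work directly from the variational characterization of the spectral norm, $\norm{\mb Q}{} = \max_{\norm{\mb z_{in}}{} = 1} \norm{\mb Q \mb z_{in}}{}$, and to exploit the vertical block structure of $\mb Q$ exhibited in \eqref{eqn:layer-wise-normalization}. The essential observation is that stacking the channel operators $\mb Q_k$ on top of one another means applying $\mb Q$ to any input $\mb z_{in}$ simply stacks the individual outputs $\mb Q_k \mb z_{in}$, so the squared Euclidean norm of the result splits additively across channels. Everything then reduces to combining this decomposition with the sub-multiplicativity of the operator norm.

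Concretely, I would first write, for an arbitrary $\mb z_{in}$,
\begin{align*}
    \norm{\mb Q \mb z_{in}}{}^2 \;=\; \sum_{k=1}^{C_O} \norm{\mb Q_k \mb z_{in}}{}^2,
\end{align*}
which follows because the $\ell_2$ norm of a concatenated vector is the square root of the sum of the squared norms of its blocks. Next I would bound each summand by sub-multiplicativity, $\norm{\mb Q_k \mb z_{in}}{}^2 \leq \norm{\mb Q_k}{}^2 \norm{\mb z_{in}}{}^2$, and factor out $\norm{\mb z_{in}}{}^2$ to obtain
\begin{align*}
    \norm{\mb Q \mb z_{in}}{}^2 \;\leq\; \paren{ \sum_{k=1}^{C_O} \norm{\mb Q_k}{}^2 } \norm{\mb z_{in}}{}^2.
\end{align*}
Restricting to unit-norm $\mb z_{in}$ and taking the maximum over the left-hand side then yields $\norm{\mb Q}{}^2 \leq \sum_{k=1}^{C_O} \norm{\mb Q_k}{}^2$, and taking square roots gives the claim.

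There is no genuine obstacle here — the argument is essentially a one-line consequence of the block structure and sub-multiplicativity — so the only point requiring care is the bookkeeping in the first display: one must verify that $\mb Q \mb z_{in}$ is exactly the vertical concatenation of the $\mb Q_k \mb z_{in}$, which is immediate from \eqref{eqn:layer-wise-normalization} since every block acts on the same shared input $\mb z_{in}$. It is also worth remarking that the bound is tight precisely when all the $\mb Q_k$ attain their spectral norm along a common right singular vector; in the isometry regime where each $\mb Q_k$ is a tight frame with $\norm{\mb Q_k}{} = 1$, the bound specializes to $\norm{\mb Q}{} \leq \sqrt{C_O}$, which is the quantitative content relevant to the layer-wise Lipschitz discussion.
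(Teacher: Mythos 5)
Your argument is correct and is essentially the same as the paper's: the paper writes $\norm{\mb Q}{}^2=\lambda_1(\mb Q^\top\mb Q)=\lambda_1\paren{\sum_{k}\mb Q_k^\top\mb Q_k}\leq\sum_{k}\lambda_1(\mb Q_k^\top\mb Q_k)$, and your variational computation $\norm{\mb Q\mb z_{in}}{}^2=\sum_k\norm{\mb Q_k\mb z_{in}}{}^2\leq\sum_k\norm{\mb Q_k}{}^2\norm{\mb z_{in}}{}^2$ is exactly the quadratic-form splitting that underlies that eigenvalue subadditivity, just written out explicitly. Your closing remarks on tightness and the $\sqrt{C_O}$ specialization are a correct bonus not present in the paper.
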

\vspace{-0.1in}
\begin{proof}
We defer the proof to the Appendix \ref{subsec:proof_31}. 
\end{proof}

\vspace{-0.15in}
\begin{itemize}[leftmargin=*]
    \item \textbf{Efficient implementations.} There are many existing results \cite{huang2017orthogonal, huang2020controllable, trockman2021orthogonalizing} trying to normalize the whole layerwise weight matrix. For ConvNets, as the matrix is neither circulant nor block circulant, computing its inversion is often computationally prohibitive. Here, for each layer, we only normalize the weight matrix of the individual output channel. Thus similar to \Cref{subsec:warm-up}, the inversion in \eqref{eqn:Conv-Norm-k} can be much more efficiently computed via FFT by exploiting the benign convolutional structure.
    \vspace{-0.06in}
    \item \textbf{Improving layer-wise Lipschitzness.} As we can see from Proposition \ref{prop:norm-bound}, although ConvNorm only normalized the spectral norm of each channel, it can actually reduce the spectral norm of the whole weight matrix, improving the Lipschitzness of each layer; see \Cref{fig:Lipschitz} (Right) for a numerical demonstration on ResNet18. As extensively investigated \cite{cisse2017parseval, li2019preventing, trockman2021orthogonalizing}, improving the Lipschitzness of the weights for ConvNets will lead to enhanced robustness against data corruptions, for which we will demonstrate on the proposed ConvNorm in \Cref{subsec:exp-robustness}.
    \vspace{-0.06in}
    \item \textbf{Easier training and better generalization.} For deconvolution and convolutional sparse coding problems, the work \cite{qu2019nonconvex,qu2020geometric} showed that ConvNorm could dramatically improve the corresponding nonconvex optimization landscapes. On the other hand, from an algorithmic unrolling perspective for neural network design \cite{gregor2010learning, monga2019algorithm}, the ConvNorm is analogous to the preconditioned or conjugate gradient methods \cite{nocedal2006numerical} which often substantially boost algorithmic convergence. Therefore, we conjecture that the ConvNorm also leads to better optimization landscapes for training ConvNets, that they can be optimized faster to better solution qualities of generalization. We empirically show this in \Cref{subsec:training}.

\end{itemize}

\subsection{Extra technical details}

To achieve the full performance and efficiency potentials of the proposed ConvNorm, we discuss some essential implementation details in the following. 

\paragraph{Efficient back-propagation.} For ConvNorm, as the normalization matrix in \eqref{eqn:Conv-Norm-k} is constructed from the learned kernels, it complicates the computation of the gradient in back-propagation when training the network. Fortunately, we observe that treating the normalization matrices $\Brac{\mb P_k}$ as constants during back-propagation usually does \emph{not} affect the training and generalization performances, so that the computational complexity in training is not increased. We noticed that such a technique has also been recently considered in \cite{chen2020exploring} for self-supervised learning, which is termed as \emph{stop-gradient}.

\paragraph{Learnable affine tranform.} \label{subsec:extra}
For each channel, we include an (optional) \emph{affine transform} after the normalization $\mb P_k \cdot {\mb z}_{k,out} = \mb C_{\mb v_k} \cdot {\mb z}_{k,out} = \mb v_k \ast {\mb z}_{k,out} $ in \eqref{eqn:precond-multi} as follows:

\begin{align*}
   \overline{\mb z}_k\;=\;  \mb r_k\ast \wt{\mb z}_{k,out} \;=\; \mb r_k \ast \mb v_k \ast {\mb z}_{k,out},
\end{align*}
where the extra convolutional kernel $\mb r_k$ is learned along with the original model parameters. The idea of including this affine transform is analogous to including a learnable rescaling in BatchNorm, which can be considered as an "undo" operation to make sure the identity transform can be represented~\cite{ioffe2015batch}. The difference between our affine transform and BatchNorm is that we apply channel-wise convolutions instead of simple rescaling (see Figure~\ref{fig:ConvNorm}). Note that when $\mb r_k$ is an inverse kernel of $\mb v_k$ (i.e., ${\mb r}_k \ast \mb v_k = \mb 1 $), the overall transformation becomes an identity. The effectiveness of affine transform is demonstrated in the ablation study in Appendix \ref{subsec:app_class}. 

\paragraph{Dealing with stride and 2D convolution.} 
There are extra technicalities that we briefly discuss below. For more details, we refer the readers to Appendix \ref{app:implement}.
\vspace{-0.1in}
\begin{itemize}[leftmargin=*]
    \item \emph{Extension to 2D convolution.} Although we introduced the ConvNorm based on 1D convolution for the simplicity of presentations, it should be noted that our approach can be easily extended to the 2D case via 2D FFT.
    \vspace{-0.05in}
    \item \emph{Dealing with stride.} Strided convolutions are universal in modern ConvNet architectures such as the ResNet~\cite{he2016deep}, which can be viewed as downsampling after unstrided convolution. To deal with stride for our ConvNorm, we first perform an unstrided convolution, normalizing the activations using ConvNorm and then downsampling the normalized activations. In comparison, the method proposed in \cite{trockman2021orthogonalizing} is incompatible with strided convolutions.
\end{itemize}

\section{Experiments \& Results}\label{sec:exp}
In this section, we run extensive experiments on CIFAR and ImageNet, empirically demonstrating two major advantages of our approach: \emph{(i)} it improves the \emph{robustness} against adversarial attacks, data scarcity, and label noise corruptions \cite{guo2019simple,goodfellow2015explaining,madry2018towards}, and \emph{(ii)} it makes deep ConvNets \emph{easier to train} and perform better on problems such as classification and GANs~\cite{Goodfellow2014GenerativeAN}. The rest of this section is organized as follows. First, we introduce baseline methods for comparisons, and describe the setups of network architectures, datasets, and training. In \Cref{subsec:exp-robustness} and \Cref{subsec:training}, we demonstrate the effectiveness of our approach on robustness and training, respectively.


\paragraph{Baseline methods for comparisons.}
We compare our method with three representative methods.

\begin{itemize}[leftmargin=*]
    \item \textbf{Spectral normalization (SN).} For each layer of ConvNets, the work \cite{miyato2018spectral} treats multi-dimensional convolutional kernels as 2D matrices (e.g.,  by flattening certain dimensions) and normalizes its spectrum (i.e., singular values). It estimates the matrix's maximum singular value via a power method and then uses it to normalize all the singular values. As we discussed in \Cref{sec:intro}, the method does not exploit convolutional structures of ConvNets.
  
    \item \textbf{Orthogonalization by Newton's Iteration (ONI).} The work \cite{huang2020controllable} whitens the same reshaped matrices as SN, so that the reshaped matrices are reparametrized to orthogonality. However, the method needs to compute full inversions of covariance matrices, which is approximated by Newton's iterations. Again, no convolutional structure is utilized.

    \item \textbf{Orthogonal ConvNets (OCNN).} Few methods that exploit convolutional structures are \cite{wang2019orthogonal,qi2020deep}, which enforce orthogonality on doubly block circulant matrices of kernels via penalties on the loss. Here, we compare with \cite{wang2019orthogonal}.
\end{itemize}

\paragraph{Setups of dataset, network and training.} For all experiments, if not otherwise mentioned, CIFAR-10 and CIFAR-100 datasets are processed with standard augmentations, i.e., random cropping and flipping. We use 10\% of the training set for validation and treat the validation set as a held-out test set. For ImageNet, we perform standard random resizing and flipping. For training, we observe our ConvNorm is not sensitive to the learning rate, and thus we fix the initial learning rate to $0.1$ for all experiments.\footnote{For experiments with ONI, we use learning rate $0.01$ since the loss would be trained to NaN if with $0.1$.} For experiments on CIFAR-10, we run $120$ epochs and divide the learning rate by $10$ at the $40$th and $80$th epochs; for CIFAR-100, we run $150$ epochs and divide the learning rate by $10$ at the $60$th and $120$th epoch; for ImageNet,we run $90$ epochs and divide the learning rate by $10$ at the $30$th and $90$th epochs. The optimization is done using SGD with a momentum of $0.9$ and a weight decay of $0.0001$ for all datasets. For networks we use two backbone networks: VGG16~\cite{Simonyan2015VeryDC} and ResNet18~\cite{he2016deep}. We adopt Xavier uniform initialization~\cite{Glorot2010UnderstandingTD} which is the default initialization in PyTorch for all networks.


\begin{center}
\begin{table}[t]
\centering
\resizebox{0.80\linewidth}{!}{

 \begin{tabular}{c c c c c c c} 
\toprule
$\epsilon$ & Test Acc. & SN & BN & ONI & OCNN & ConvNorm\\ 
 \midrule
 \midrule
 0 & Clean & 82.52 $\pm$ 0.22 & 82.13 $\pm$ 0.67 & 80.70 $\pm$ 0.14 & 82.90 $\pm$ 0.31 & \textbf{83.23 $\pm$ 0.25}\\
 \midrule
 \multirow{4}{*}{{$\frac{8}{255}$}}
  & FGSM & 52.34 $\pm$ 0.33 & 51.72 $\pm$ 0.52 & 48.33 $\pm$ 0.16 & 52.49 $\pm$ 0.21 & \textbf{52.87 $\pm$ 0.24} \\
 \cmidrule{2-7}
 & PGD-10 & 45.68 $\pm$ 0.40 & 45.31 $\pm$ 0.29 & 42.30 $\pm$ 0.24 & 45.74 $\pm$ 0.13 & \textbf{46.12 $\pm$ 0.26} \\
  \cmidrule{2-7}
 & PGD-20 & 44.47 $\pm$ 0.37 & 44.04 $\pm$ 0.24 & 41.08 $\pm$ 0.30 & 44.53 $\pm$ 0.10 & \textbf{44.75 $\pm$ 0.30} \\
 \bottomrule 
\end{tabular}} 

\caption{\textbf{Comparison of ConvNorm to baseline methods under different gradient based attacks.} Models are robustly trained following the procedure in~\cite{shafahi2019adversarial} using a ResNet18 backbone. Experiments are conducted on CIFAR-10 dataset. Results are averaged over $4$ random seeds.}
\label{table:adv_training}

\end{table} 
\end{center}

\begin{table}[t]
\centering
\resizebox{0.50\linewidth}{!}{

 \begin{tabular}{c c c} 
\toprule
 Method & Average Queries & Attack Success rate (\%) \\ 
 \midrule
 \midrule
 
 SN & 2519.32 & 60.60 \\
 \midrule
 ONI & 2817.09 & 55.90 \\ 
 \midrule
 OCNN & 2892.81 & 54.50 \\
 \midrule
 ConvNorm & \textbf{2966.16} & \textbf{53.50} \\

 \bottomrule
\end{tabular}}

\caption{\textbf{Comparison of ConvNorm to baseline methods on SimBA black box attack.} The mean value of average queries (the higher, the better) and attack success rate (the lower, the better) throughout $3$ runs are reported. Models are trained using a ResNet18 backbone without BN layers.}
\label{table:simba}

\end{table} 
\subsection{Improved robustness}\label{subsec:exp-robustness}

In this section, we demonstrate our method is more robust to various kinds of adversarial attacks, as well as random label corruptions and small training datasets. 

\paragraph{Robustness against adversarial attacks.} Existing results \cite{trockman2021orthogonalizing, wang2019orthogonal} show that controlling the layer-wise Lipschitz constants for deep networks improves robustness against adversarial attack. Since our method improves the Lipschitzness of weights (see \Cref{fig:Lipschitz}), we demonstrate its robustness under adversarial attack on the CIFAR-10 dataset. We adopt both white-box (gradient based) attack~\cite{goodfellow2015explaining,madry2018towards} and black-box attack~\cite{guo2019simple} to test the robustness of our proposed method and other baseline methods. The results are presented in \Cref{table:adv_training} and \Cref{table:simba}. For the ease of presentation, all technical details about model training and generation of the adversarial examples are postponed to Appendix \ref{app:exp}.

In the case of gradient based attacks, we follow the training procedure described in~\cite{shafahi2019adversarial} to train models with our ConvNorm and other baseline methods. We report the performances of the robustly trained models on both the clean test dataset and datasets that are perturbed by Fast Gradient Sign Method (FGSM)~\cite{goodfellow2015explaining} and Projected Gradient Method (PGD)~\cite{madry2018towards}. As shown in \Cref{table:adv_training}, our ConvNorm outperforms other methods in terms of robustness under white-box attack while maintaining a good performance on clean test accuracy.

For black-box attack, we adopt a popular black-box adversarial attack method, Simple Black-box Adversarial Attacks (SimBA) \cite{guo2019simple}. By submitting queries to a model for updated test accuracy, the attack method iteratively finds a perturbation where the confidence score drops the most. We report the average queries and success rate after $3072$ iterations in \Cref{table:simba}. As we can see, the ConvNorm resists the most queries, and that the SimBA has the lowest attack success rate for ConvNorm compared with other baseline methods.


\begin{figure}[t]
\begin{tabular}{>{\centering\arraybackslash}m{0.45\linewidth} >{\centering\arraybackslash}m{0.45\linewidth}}
    \quad\quad {\small Noisy Label} & \quad\quad {\small Data Scarcity}  \\
    \includegraphics[width=0.8\linewidth]{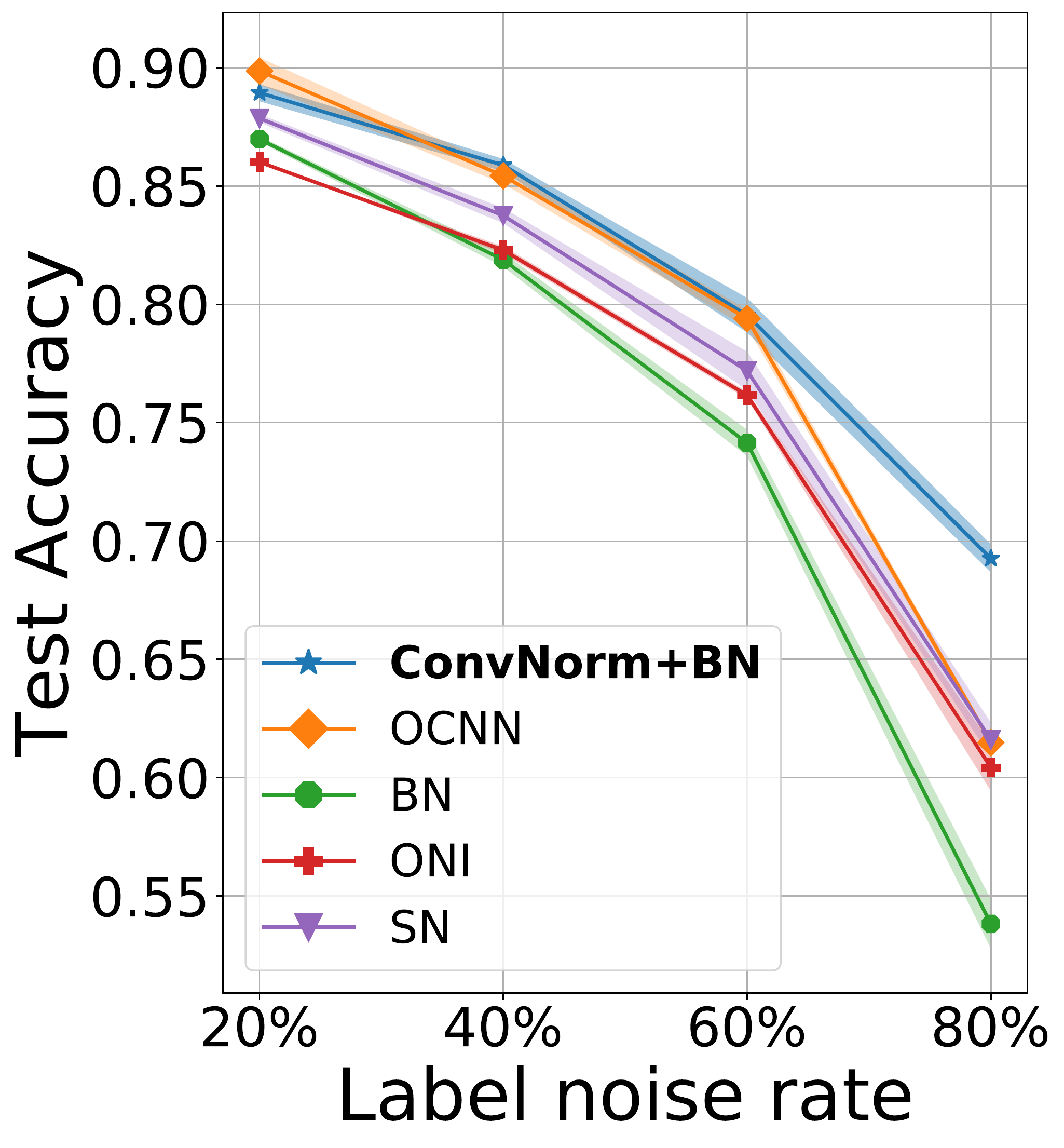} &
   \includegraphics[width=0.8\linewidth]{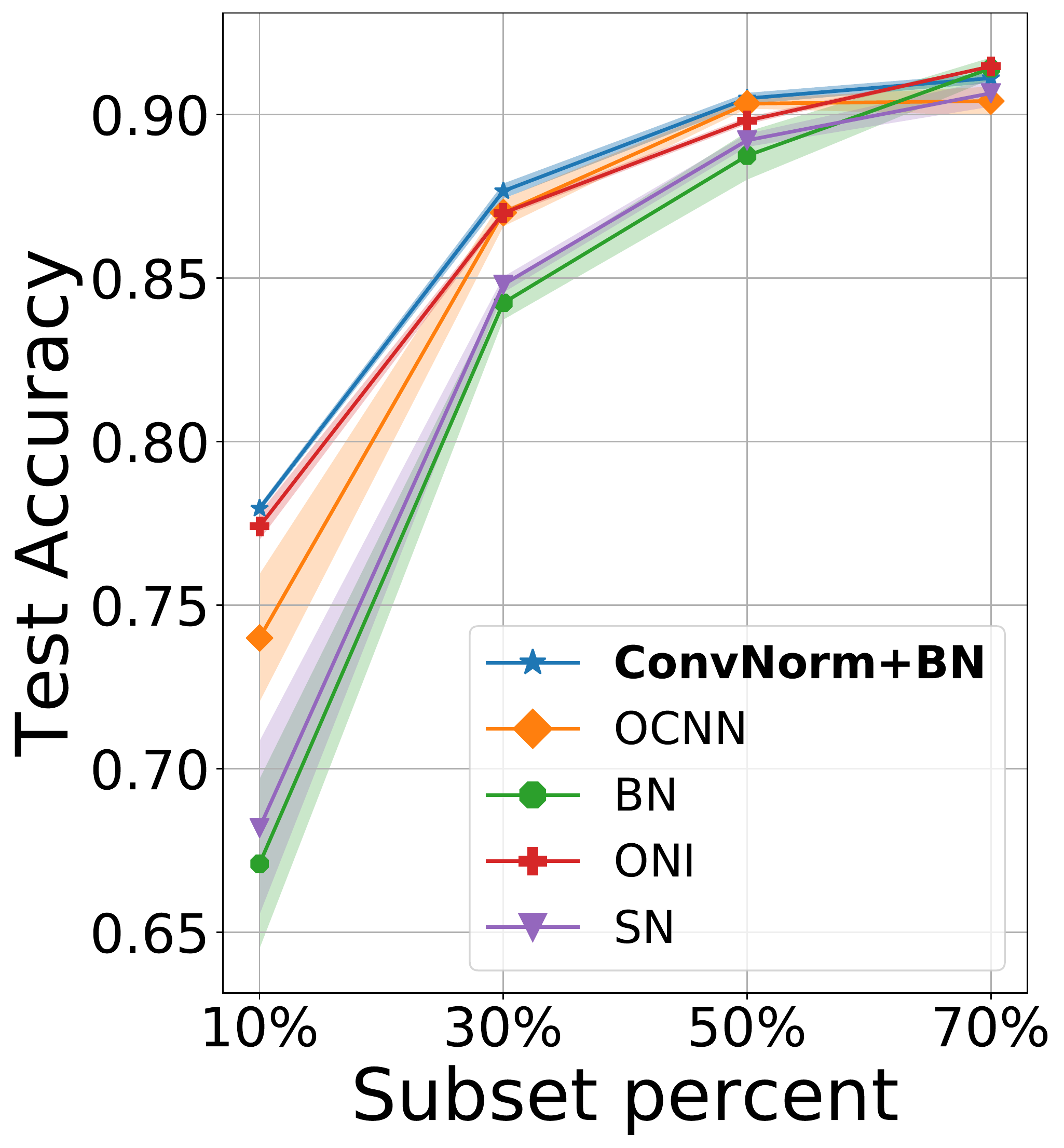} \\
    \end{tabular}

	\caption{\textbf{Test accuracy for noisy label (Left) and insufficient training data (Right)}. Experiments are conducted on CIFAR-10 dataset using a ResNet18 backbone. Error bars corresponding to standard deviations over 3 runs.}
	\label{fig:data_scar_noise_label}

\end{figure}

\paragraph{Robustness against label noise and data scarcity.}

It has been widely observed that overparameterized ConvNets tend to overfit when label noise presents or the amount of training labels is limited~\cite{szegedy2016rethinking, arpit2017closer,li2020gradient, liu2020early}. Recent work~\cite{hu2019simple} shows that normalizing the weights enforces certain regularizations, which can improve generalization performance against both label noise and data scarcity. Since our method is essentially reparametrizing and normalizing the weights, we demonstrate the robustness of our approach under these settings on CIFAR-10 with ResNet18 backbone.

\begin{itemize}[leftmargin=*]

    \item \emph{Robustness against label noise.} Following the scheme proposed in~\cite{patrini2017making}, we simulate noisy labels by randomly flipping 20\% to 80\% of the labels in the training set. As shown in \Cref{fig:data_scar_noise_label} (Left), our method outperforms the others on most noisy rates by a hefty margin when the noise level is high.

    \item \emph{Robustness against data scarcity.} We test our method on training the network with varying sizes of the training set, obtained by randomly sampling. The results in \Cref{fig:data_scar_noise_label} (Right) show that our ConvNorm achieves on par performance compared with baseline methods, and its performance stays high even when the size of the training data is tiny (e.g., $4500$ examples).
\end{itemize}


\subsection{Easier training on classification and GAN}


\label{subsec:training}
\begin{figure}[t]
\begin{tabular}{>{\centering\arraybackslash}m{0.31\linewidth} >{\centering\arraybackslash}m{0.31\linewidth}>{\centering\arraybackslash}m{0.31\linewidth}}
    \quad CIFAR-10 (VGG16) & \quad\quad CIFAR-10 (ResNet18) &  ImageNet (ResNet18) \\
    \includegraphics[width=0.95\linewidth]{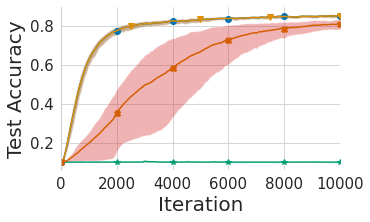} &
   \includegraphics[width=0.95\linewidth]{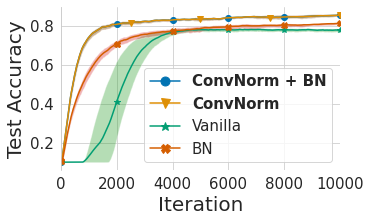} &
   \includegraphics[width=1\linewidth]{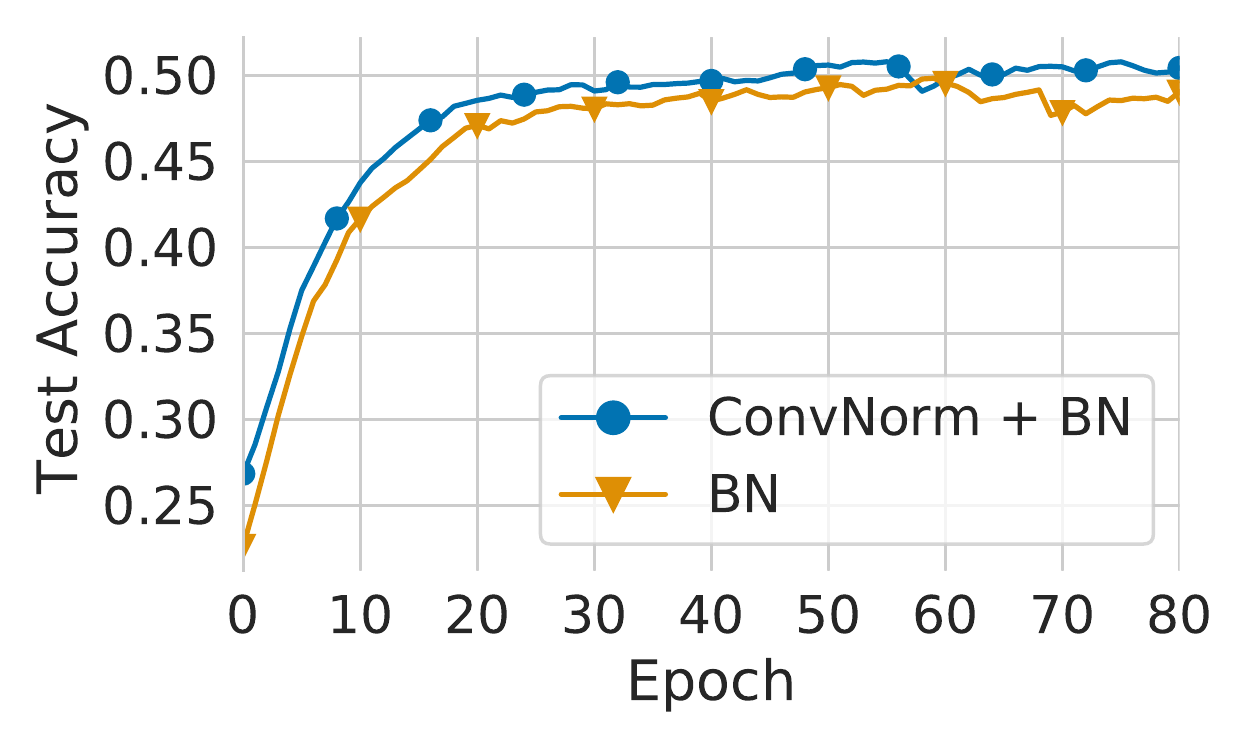} \\
    \end{tabular}

	\caption{\textbf{ConvNorm accelerates convergence.} VGG16 trained on CIFAR-10 (Left), and ResNet18 (Middle) trained on CIFAR-10 and ImageNet (Right),  with and without ConvNorm or BatchNorm. We do not use data augmentation, weight decay, or any other regularization in this experiment to isolate the effects of the normalization techniques. Error bars correspond to min/max over 4 runs.}
	\label{fig:compare-precond}

\end{figure}

Finally, we compare training convergence speed for classification and performances on GAN. Extra experiments on better generalization performance and ablation study can be found in Appendix \ref{subsec:app_class}.

\paragraph{Improved training on supervised learning.}
We test our method on image classification tasks with two backbone architectures: VGG16 and ResNet18.
We show that ConvNorm accelerates the convergence of training. To isolate the effects of the normalization layers for training, we train on CIFAR-10 and ImageNet without using any augmentation, regularization, and learning rate decay. In~\Cref{fig:compare-precond}, we show that adding ConvNorm consistently results in faster convergence, stable training (less variance in accuracy), and superior performance. On CIFAR-10, there is a wide performance gap after the first few iterations of training: 1000 iterations of training with ConvNorm lead to generalization performance comparable to 8000 iterations of training using BatchNorm. In the case of standard settings where data augmentation, regularization and learning rate decay are added, we notice that using ConvNorm and BatchNorm together also yield better test performances compared to only using BatchNorm (See Appendix \ref{subsec:app_class} for details). Besides the convergence speed of training, the exact training time for different methods is another important factor for measuring the efficiency of such methods. To this end, we empirically compare the training time for different methods and report the results in Appendix \ref{app:ablation} and \Cref{tab:run_time}.

\paragraph{Improved performance for GANs.}

It has been found that improving the Lipschitz condition of the discriminator of GAN stabilizes its training~\cite{Zhou2019LipschitzGA}. For instance, WGAN-GP \cite{gulrajani2017improved} demonstrates that adding a gradient penalty (1-GP) regularization to enforce the 1-Lipschitzness of the discriminator stabilizes GAN training and prevents mode collapse. Subsequent works \cite{kodali2017convergence,petzka2018on} using variants of the 1-GP regularization also show their improvement in GAN. Later on,  \cite{miyato2018spectral} further reveals the performance of GAN can be significantly improved if the spectral norm (Lipschitz condition) of the discriminator network is strictly enforced to 1. As shown in \Cref{fig:Lipschitz}, the proposed ConvNorm also controls the Lipschitz condition of ConvNets. Therefore, we expect our method to also ameliorates the performance of GAN.

\begin{table}[t]
\centering
\resizebox{0.5\linewidth}{!}{
	\begin{tabular}	{c c c c c c} 
		\toprule	 	
			Metric & SN  
			  & ONI & OCNN & Vanilla & {ConvNorm} \\
       \midrule	
      \multirow{1}{*} IS & \textbf{8.12} & {7.07} & {7.54} & {7.13} & {7.62}\\
      \midrule	
      \multirow{1}{*} FID  & \textbf{14.53}  & {29.49} & {22.15} & {29.47} & {21.37}\\
	\bottomrule
	\end{tabular}}
	
	
	
    \caption{\textbf{Comparison of ConvNorm to baseline methods on GAN training.} Inception score (IS) (the higher, the better) and FID score (the lower, the better) of ResNet with different normalizations. For each pair of model and method, we generate $50k$ images $10$ times and compute the mean of IS.}
    \label{tab:gan_results}
\end{table}	

To demonstrate the effectiveness of the ConvNorm on GAN, we compare it with other baseline methods introduced previously. In our experiments, we adopt the same settings and architecture suggested in~\cite{miyato2018spectral} without any modification, and we use the inception score (IS) \cite{salimans2016improved}, and FID \cite{heusel2017gans} score for quantitative evaluation. As shown in Table \ref{tab:gan_results}, our ConvNorm achieves the second-best performance to SN.\footnote{The performance of GANs is highly sensitive to the computational budget and the hyperparameters of the networks \cite{lucic2018gans}, and the hyperparameters of SN is fine-tuned for CIFAR-10 while we use the same hyperparameters as SN.}

\section{Discussions \& Conclusion}\label{sec:conclusion}


In this work, we introduced a new normalization approach for ConvNets, which explicitly exploits translation-invariance properties of convolutional operators, leading to efficient implementation and boosted performances in training, generalization, and robustness. Our work has opened several interesting directions to be further exploited for normalization design of ConvNets: \emph{(i)} although we provided some high-level intuitions why our ConvNorm works, theoretical justifications are needed; \emph{(ii)} as our ConvNorm only promotes channel-wise ``orthogonality'', it would be interesting to utilize similar ideas to efficiently normalize the layerwise weight matrices by exploiting convolutional structures. We leave these questions for future investigations.

\section*{Acknowledgement}

Part of this work was done when XL and QQ were at Center for Data Science, NYU. SL, XL, CFG, and QQ were partially supported by NSF grant DMS 2009752. SL was partially supported by NSF NRT-HDR Award 1922658. CY acknowledges support from Tsinghua-Berkeley Shenzhen Institute Research Fund. ZZ acknowledges support from NSF grant CCF 2008460. QQ also acknowledges support of Moore-Sloan fellowship, and startup fund at the University of Michigan.

\newpage 

{\small
\medskip
\bibliographystyle{ieeetr}
\bibliography{nips,nsf_proposal}
}

\newpage

\appendices


The whole appendix is organized as follows. 
\begin{itemize}[leftmargin=*]
    \item In  Appendix~\ref{app:basic}, we introduce the basic notations that are used throughout the paper and the appendix, and introduce the basic tools for analysis.
    \item In Appendix~\ref{app:implement}, we describe the implementation details of our ConvNorm, including details for dealing with 2D convolutions, strides, paddings, and the discuss about the differences between different types of convolutions.
    \item In Appendix~\ref{app:exp}, we describe the the experimental settings for \Cref{sec:exp} in detail. 
    \item Finally, in Appendix~\ref{app:ablation} we conduct a more comprehensive ablation study on the influence of different components of the proposed ConvNorm.
\end{itemize}

\section{Notations \& basic tools} \label{app:basic}


\subsection{Notations}

Throughout this paper, all vectors/matrices are written in bold font $\mb a$/$\mb A$; indexed values are written as $a_i, A_{ij}$. For a matrix $\mb A \in \bb C^{m \times n}$, we use $\mb A^\top$ and $\mb A^*$ to denote the transpose and conjugate transpose of $\mb A$, respectively. We let $[m] =\Brac{1,2,\cdots,m}$. Let $\mb F_n \in \bb C^{n \times n}$ denote a unnormalized $n\times n$ DFT matrix, with $\norm{\mb F_n}{} = \sqrt{n}$, and $\mb F_n^{-1} = n^{-1}\mb F_n^*$. In many cases, we just use $\mb F$ to denote the DFT matrix. For any vector $\mb v\in \bb C^n$, we use $\wh{\mb v} = \mb F \mb v$ to denote its Fourier transform, and $\ol{\mb v}$ denotes the conjugate of $\mb v$. We use $\ast$ to denote the \emph{circular} convolution with modulo-$n$: $\paren{\mb v \ast \mb u}_i = \sum_{j=0}^{m-1} v_j u_{i-j}$, and we use $\conv$ to denote the cross-correlation $\mb v \conv \mb u$ used in modern ConvNets.


\subsection{Circular convolution and circulant matrices.}
 For a vector $\mb v \in \bb R^n$, let $\mathrm{s}_\ell[\mb v]$ denote the cyclic shift of $\mb v$ with length $\ell$.  Thus, we can introduce the circulant matrix $\mb C_{\mb v}\in \bb R^{n \times n}$ generated through $\mb v \in \bb R^n$, that is,
\begin{align*}\label{eqn:circulant matrx constrcut}
   \mb C_{\mb v} = \begin{bmatrix}
   v_1 & v_n & \cdots & v_3 & v_2 \\
   v_2 & v_1 & v_n  & & v_3 \\
   \vdots & v_2 & v_1 & \ddots &\vdots \\
   v_{n-1} &  & \ddots  & \ddots  &v_n\\
   v_n &  v_{n-1} & \cdots  & v_2 &v_1
 \end{bmatrix} = \begin{bmatrix}
 	\mathrm{s}_0\brac{ \mb v } & \mathrm{s}_1 \brac{\mb v} & \cdots & \mathrm{s}_{n-1}\brac{\mb v}
 \end{bmatrix}.
\end{align*}
Now the circular convolution can also be written in a simpler matrix-vector product form. For instance, for any $\mb u,\;\mb v \in \bb R^n$, we have
\begin{align*}
   \mb u \ast \mb v = \mb C_{\mb u}  \cdot  \mb v = \mb C_{\mb v} \cdot \mb u.
\end{align*}

In addition, the cross-correlation between $\mb u$ and $\mb v$ can be also written in a similar form of convolution operator which reverses one vector before convolution with $\check{\mb u}\ast \mb v$, where $\check{\mb v}$ denote a \emph{cyclic reversal} of $\mb v \in \bb R^m$ (i.e., $\check{\mb v} = \brac{ v_1, v_m, v_{m-1},\cdots,v_2 }^\top$). 


\subsection{Proof of Proposition~\ref{prop:norm-bound}} \label{subsec:proof_31}
We restate Proposition \ref{prop:norm-bound} in \Cref{sec:iso-bn} in the following.
\begin{proposition}
The spectral norm of $\mb Q$ introduced in \eqref{eqn:layer-wise-normalization} can be bounded by
\vspace{-0.1in}
\begin{align*}
    \norm{\mb Q}{} \;\leq\; \sqrt{ \sum_{k=1}^{C_O} \norm{\mb Q_k}{}^2 },
\end{align*}
that spectral norm of $\mb Q$ is bounded by the spectral norms of all the weights $\Brac{\mb Q_k}_{k=1}^{C_O}$. 
\end{proposition}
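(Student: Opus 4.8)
The plan is to work directly from the variational (operator-norm) characterization of the spectral norm, exploiting the fact that $\mb Q$ is a \emph{vertical} stacking of the blocks $\Brac{\mb Q_k}_{k=1}^{C_O}$, so that every block acts on the \emph{same} input vector. Recall that by definition $\norm{\mb Q}{} = \max_{\norm{\mb x}{2}=1} \norm{\mb Q \mb x}{2}$, where here $\mb x$ plays the role of $\mb z_{in}$. The first step is simply to write out what $\mb Q \mb x$ is: because of the block structure in \eqref{eqn:layer-wise-normalization}, we have $\mb Q \mb x = \brac{ \paren{\mb Q_1 \mb x}^\top, \cdots, \paren{\mb Q_{C_O} \mb x}^\top }^\top$, i.e. the image is the concatenation of the individual images $\mb Q_k \mb x$.

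Next I would compute the squared Euclidean norm of this stacked vector. Since the $\ell_2$ norm of a concatenation is the square root of the sum of the squared norms of the pieces, we obtain the exact identity $\norm{\mb Q \mb x}{2}^2 = \sum_{k=1}^{C_O} \norm{\mb Q_k \mb x}{2}^2$. The key inequality then comes from bounding each summand using the definition of the spectral norm of the corresponding block, namely $\norm{\mb Q_k \mb x}{2} \leq \norm{\mb Q_k}{} \norm{\mb x}{2}$, which gives $\norm{\mb Q \mb x}{2}^2 \leq \paren{ \sum_{k=1}^{C_O} \norm{\mb Q_k}{}^2 } \norm{\mb x}{2}^2$.

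The final step is to restrict to unit-norm $\mb x$ and take the maximum over all such $\mb x$, which yields $\norm{\mb Q}{}^2 \leq \sum_{k=1}^{C_O} \norm{\mb Q_k}{}^2$; taking square roots gives the claimed bound.

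There is no serious obstacle here, as the argument is essentially a one-line consequence of the operator-norm definition together with the additivity of squared norms under vertical concatenation. The only conceptual point worth flagging is \emph{why} the bound takes the sum-of-squares form rather than a maximum: it is precisely because the blocks are stacked vertically (sharing a common input $\mb x$), in contrast to a block-diagonal arrangement where distinct blocks act on disjoint coordinates and the spectral norm would instead equal $\max_k \norm{\mb Q_k}{}$. I would also note in passing that this bound is generally loose—the maximizing $\mb x$ for the full matrix need not align with the top singular direction of every block simultaneously—but looseness is irrelevant for the qualitative conclusion of \Cref{prop:norm-bound}, which is that normalizing each channel to $\norm{\mb Q_k}{} \leq 1$ controls the layer-wise spectral norm by $\sqrt{C_O}$.
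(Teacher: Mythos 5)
Your proof is correct and is essentially the paper's argument: the paper writes $\sigma_1^2(\mb Q)=\lambda_1(\mb Q^\top\mb Q)=\lambda_1(\sum_k \mb Q_k^\top\mb Q_k)\le\sum_k\lambda_1(\mb Q_k^\top\mb Q_k)$, which is exactly your identity $\norm{\mb Q\mb x}{2}^2=\sum_k\norm{\mb Q_k\mb x}{2}^2$ followed by the blockwise operator-norm bound, just phrased via eigenvalues of $\mb Q^\top\mb Q$ instead of the Rayleigh-quotient form. No substantive difference.
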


\begin{proof}
Suppose we have a matrix of the form $ \mb Q \;=\; \begin{bmatrix}
    \mb Q_1 \\
    \vdots \\
    \mb Q_{C_O}
    \end{bmatrix}$, then by using the relationship between singular values and eigenvalues,
\begin{align*}
    \sigma_1^2(\mb Q) \;&=\; \lambda_1\paren{ \mb Q^\top \mb Q }  \;=\; \lambda_1 \paren{ \sum_{i=1}^{C_O} \mb Q_i^\top \mb Q_i } 
    \;\leq\; \sum_{i=1}^{C_O} \lambda_1\paren{\mb Q_i^\top \mb Q_i} = \sum_{i=1}^{C_O} \sigma_1^2(\mb Q_i).
\end{align*}
Thus, we have
\begin{align*}
    \sigma_1(\mb Q) \;\leq \; \sqrt{ \sum_{i=1}^{C_O} \sigma_1^2(\mb Q_i)  },
\end{align*}
as desired.
\end{proof}


\section{Implementation details for \Cref{sec:iso-bn}}\label{app:implement}
In the main paper, for the ease of presentation we only introduced and discussed the high-level idea of the proposed ConvNorm, with few technical details missing. Here, we discuss the implementation details of the ConvNorm for ConvNets in practice. More specifically, Appendix \ref{sec:algorithm} provides the pseudocode of ConvNorm with circular convolutions, which is easy for presentation and analysis. It should be noted that modern ConvNets often use cross-correlation instead of circular convolutions. Hence in  Appendix \ref{subsec:rela_conv} and  Appendix \ref{subsec:padded_conv}, we discuss in detail on how to deal with this difference in practice. Additionally, in Appendix \ref{app:stride} and Appendix \ref{app:2d-kernel}, we include other implementation details, such as dealing with strides, and extensions from 1D to 2D convolutions.


\subsection{Algorithms \label{sec:algorithm}}
First of all, in \Cref{alg:algo_precondition} we provide detailed pseudocode of implementing the proposed ConvNorm in ConvNets for 2D input data, where the convolution operations are based on circular convolutions. From our discussion in \Cref{sec:iso-bn} , we can see that all the operations can be efficiently implemented in the frequency domain via 2D FFTs.\footnote{During evaluation, we use the moving average of $\wh{\mathbf{v}}_k$ during training, the momentum of the moving average is obtained by a cosine rampdown function $0.5\left(1+\text{cos}\left(\frac{\text{min}(\text{iter},40000)}{40000}\pi\right)\right)$, where iter is the current iteration.}

It should be noted that modern ConvNets often use cross-correlation rather than the circular convolution. Nonetheless, we discuss the differences and similarities between the two in the following. Based on this, we show how to adapt \Cref{alg:algo_precondition} to modern ConvNets (see Appendix~\ref{subsec:rela_conv}).


\begin{algorithm}[t]
\caption{\label{alg:algo_precondition}\ \ 
Pseudocode of the proposed \textbf{ConvNorm} in each layer of ConvNets with 2D inputs. 
}
\begin{tabbing}
\Req $\mathbf{z}_{out} \in \mathbb{R}^{B\times C_O \times W \times H}$ \quad\== {\small convolution outputs with batchsize $B$, channels $C_O$, width $W$, and height $H$}\\
\Req $\mathbf{a} \in \mathbb{R}^{C_O\times C_I \times k_1\times k_2}$\>= {\small kernels for all output channels $C_O$, input channels $C_{I}$, and kernel size $k_1 \times k_2$}\\
\Req $\mathbf{r}\in \mathbb{R}^{C_O\times k_1\times k_2}$ \> = {\small $C_O$ trainable kernels for affine transform with the same size of $\mb a$}\\
{\bf for} $k$ in $[1,\dots, C_O]$  \> \cm {\small for each output channel\\ {\bf do}}\\
\X $\wh{\mathbf{z}}_{k,out}\gets$ FFT$(\mathbf{z}_{k,out})$ \> \cm {\small apply 2D Fast Fourier Transform (FFT) on convolution output}\\
\X  $\wh{\mathbf{a}}_k \gets \text{FFT}(\mathbf{a}_k)$\>\cm {\small apply 2D FFT on kernels} \\
\X  $\wh{\mathbf{a}}_k \gets$ \verb"stop_gradient"$\left(\wh{\mathbf{a}}_k\right)$  \> \cm {\small treating ${\wh{\mb a}_k }$ as constants during back-propagation}  \\
\X $\wh{\mathbf{v}}_k \gets \left(\sum_{i=1}^{C_I} |\widehat{\mathbf{a}}_{ki}|^{\odot 2}\right)^{\odot -1/2}$ \> \cm {\small this is the 2D FFT of $\mathbf{v}_k$ }\\
\X $\wt{\mathbf{z}}_{k,out} \gets \text{IFFT} \left(\widehat{\mathbf{z}}_{k,out} \odot \wh{\mathbf{v}}_k\right)$ \>\cm {\small circularly convolve $\mathbf{z}_{out,k}$ with $\mathbf{v}_k$}\\
\X $\bar{\mathbf{z}}_{k,out} \gets \mathbf{r}_k \ast \wt{\mathbf{z}}_{k,out} $\> \cm {\small learnable affine transformation with $\mb r_k$}\\
\textbf{endfor}\\
\textbf{return} $\bar{\mb z}_{k,out}$ \>\cm {\small normalized convolution output}
\end{tabbing}
\end{algorithm}


\subsection{Dealing with convolutions in ConvNets}
\label{subsec:rela_conv}
Throughout the main body of the work, our description and analysis of ConvNorm are based on circular convolutions for the simplicity of presentations. However, it should be noted that current ConvNets typically use \emph{cross correlation} in each convolutional layer, which can be viewed as a variant of the classical linear convolution with flipped kernels. Hence, to adapt our analysis from circular convolution to cross-correlation (i.e., the typical convolution used in ConvNets), we need to build some sense of ``equivalence''between them. Since linear convolution has a close connection with both, we use linear convolution as a bridge to introduce the relationship and thus find the ``equivalence'' between circular convolution and cross-correlation. Based on this, we show how to adapt from circular convolution in \Cref{alg:algo_precondition} to the convolution used in modern ConvNets by simple modifications.


\paragraph{Relationship among all convolutions.} In the following presentations, assume we have a signal $\mb x \in \mathbb{R}^{n}$ and a kernel vector $\mb a \in \mathbb{R}^m$ with $m \leq n$. We first discuss the connections between linear convolution and the other two types of convolutions, and then establish the equivalence between circular convolution and cross-correlation upon the observed connections. \Cref{fig:ConvRelation} demonstrates a simple example of the connections.

\begin{itemize}[leftmargin=*]
\item \textbf{Linear convolution \& circular convolution.}
The (finite, discrete) linear and circular convolution can both be written as 
\begin{align*}
\mb y\paren{k} = \sum_{j=0}^{n-1}\mb a\paren{k-j} \mb x\paren{j}.
\end{align*}
Despite the same written form, they differ in two ways: length and index. As illustrated in \Cref{fig:ConvRelation} (i), linear convolution doesn't have constraints on the input length and the result always has length $n+m-1$. In comparison, circular convolution requires both the kernel $\mb a$ and the signal $\mb x$ to share the same length. Therefore, as in \Cref{fig:ConvRelation} (iii) and (iv), we always reduce linear convolution to circular convolution by zero padding both the kernel and the signal to the same length $n+m-1$, as shown in the example in \Cref{fig:ConvRelation} (iv). It should be noted that the reason for such length difference is also rooted in their different indexing methods. In the case of linear convolution, when indices fall outside the defined regions, the associated entries are $0$, e.g., $\mb a\paren{-1} = 0$ and $\mb x\paren{3} = 0$ as shown \Cref{fig:ConvRelation} (i). On the other hand, circular convolution uses the periodic indexing method, i.e., $\mb a\paren{-j} = \mb a\paren{m-j}$. For example, in \Cref{fig:ConvRelation} (iii), $\mb a\paren{-2} = \mb a\paren{3-2} = 4$. 

\item \textbf{Linear convolution \& cross-correlation.} As shown in \Cref{fig:ConvRelation} (i) and (ii), both linear convolution and cross-correlation operations apply the so-called \emph{sliding window} of the kernel $\mb a$ to the signal $\mb x$, where the sliding window moves to the right one step at a time when the stride equals one. However, notice that cross-correlation uses a flipped kernel compared with linear convolution. Another difference is in the length of the output, where the output of a linear convolution is of length $n+m-1$, while the output of cross-correlation is of length $n-m+1$. This is due to the fact that the cross-correlation operation does not calculate outputs for out-of-region indices (see the difference between \Cref{fig:ConvRelation} (i) and (ii) for an example). To sum up, a cross-correlation is equivalent to a kernel-flipped and truncated linear convolution. Moreover, the amount of truncation is controlled by the amount of zero-padding on the signal $\mb x$ in cross-correlation. For example, in \Cref{fig:ConvRelation} (ii), there is no zero-padding and hence the result is equivalent to truncate the first and last elements from the result in \Cref{fig:ConvRelation} (i); but consider if we zero-pad the signal $\mb x$ by $1$ element on both sides in \Cref{fig:ConvRelation} (ii), then the result would be identical with \Cref{fig:ConvRelation} (i). In general, we found that if we zero-pad the signal $\mb x$ by $m-1$ elements on both sides, a cross-correlation is equivalent to a kernel-flipped linear convolution without any truncation. We will discuss more about dealing with zero-padding in Appendix \ref{subsec:padded_conv}. 



\end{itemize}

\paragraph{Adapting circular convolution to cross-correlation in ConvNets.} Thus, based on these connections discussed above, we could now establish the ``equivalence'' between circular convolution and cross-correlation based on their connections to linear convolution, and hence adapt the proposed ConvNorm in \Cref{alg:algo_precondition} with cross-correlations in ConvNets via the following steps:

\begin{enumerate}
    \item Zero pad both sides of $\mb z_{in}$ by $m-1$ elements to get $\dot{\mb z}_{in}$.
    \item Perform the cross-correlation between the kernel $\mb a$ and the input $\dot{\mb z}_{in}$ to obtain the output $\mb z_{out}$.
    \item Apply ConvNorm on the kernel $\mb a$ and the output $\mb z_{out}$ as stated in \Cref{alg:algo_precondition} to get result $\wt{\mb z}_{out}$. 
    \item Delete the first and last $(m-1)/2$ elements of $\wt{\mb z}_{out}$ and return as the resulting output.
\end{enumerate}

Here, Step $1$ and Step $2$ are to generate the output $\mb z_{out}$ that is almost identical to what used in ConvNets, with the exception of zero-padding in Step $1$ so that it is equivalent to a circular convolution with a flipped kernel $\check{\mb a}$. Then in Step $3$, we perform ConvNorm on the output $\mb z_{out}$ and kernel $\mb a$. Notice that there is no need to flip the kernel in the above steps since as described in \Cref{alg:algo_precondition}, we only need to calculate the magnitude spectrum of a kernel and the magnitude spectrum remains consistent with a kernel flipping, i.e., $\abs{\mb F \mb a} = \abs{\mb F \check{\mb a}}$. Finally, Step $4$ is to obtain the desired output with the correct spatial dimension.

\begin{figure*}[t]
	\centering
    \includegraphics[width = \textwidth]{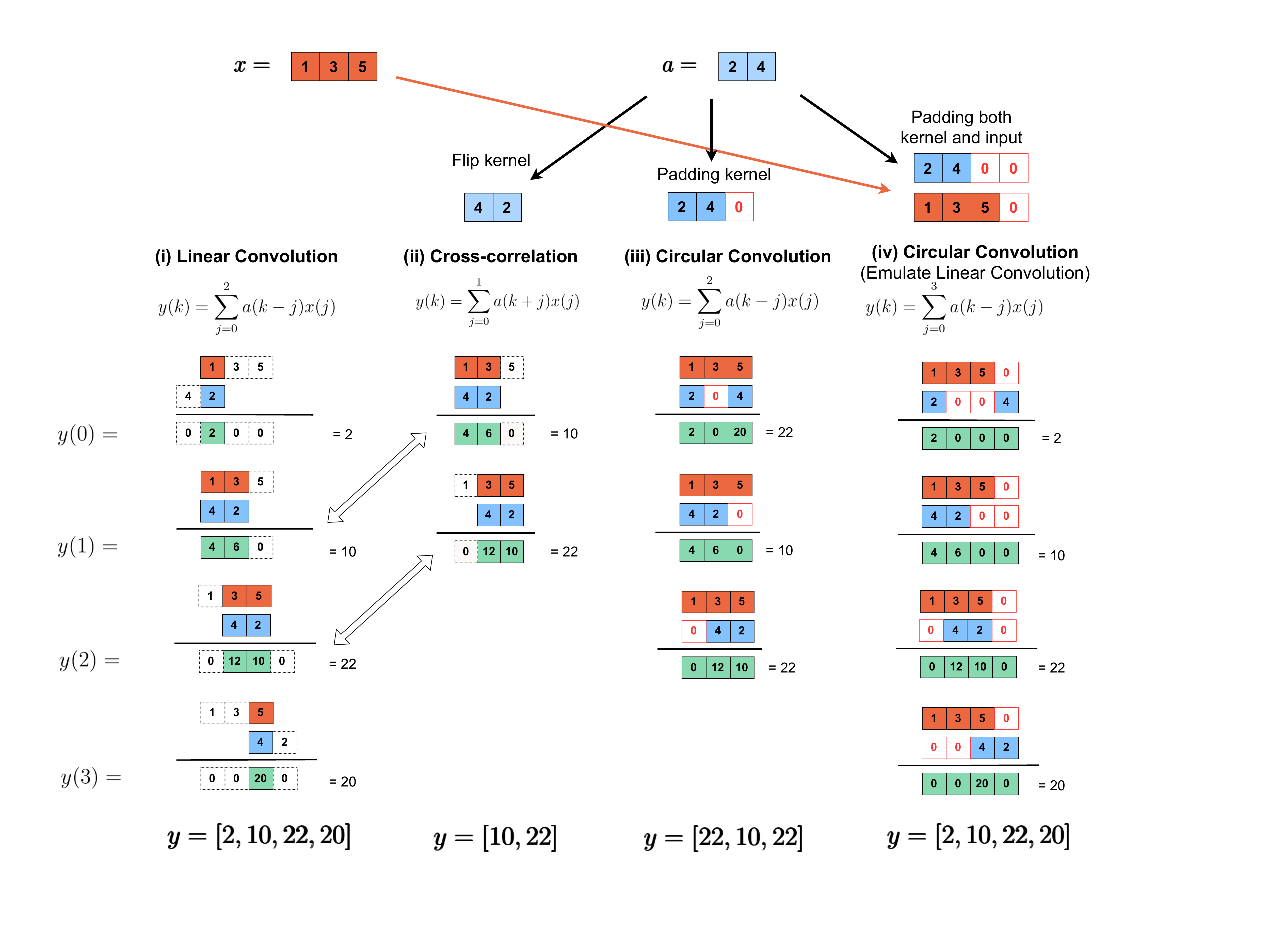}
    \vspace{-0.5in}
	\caption{\textbf{A 1D example to illustrate relationships among different kinds of convolutions.} (i), (ii), and (iii) show the operations of linear convolution, cross-correlation (i.e., the ``convolution'' used in ConvNets), and circular convolution, respectively. (iv) gives an example of emulating linear convolution in a circular convolution manner. (i), (ii) indicates that cross-correlation is essentially a linear convolution with a flipped-kernel and truncation. Hence from (i), (ii), (iv), we find equivalence between cross-correlation and circular convolution.}
	\label{fig:ConvRelation}
\end{figure*}

\subsection{Dealing with zero-paddings in ConvNets} 
\label{subsec:padded_conv}
Here, we provide more explanations about the zero-padding and truncation used in Appendix \ref{subsec:rela_conv}. Zero-padding is an operation of adding $0$s to the data, which is widely used in modern ConvNets primarily aimed for maintaining the spatial dimension of the outputs for each layer. For example, a standard stride-$1$ convolution in ConvNets between a kernel $\mb a \in \mathbb{R}^m$ and a signal $\mb z_{in} \in \mathbb{R}^{n}$ with $n > m$ produces a output vector $\mb z_{out}$ of length $n-m+1$. To make the output $\mb z_{out}$ the same length as the input signal $\mb z_{in}$, a zero-padding of $\lfloor m/2 \rfloor$ 
is often used (e.g., common in various architectures such as VGG\cite{Simonyan2015VeryDC}, ResNet\cite{he2016deep}.\footnote{$\lfloor m \rfloor$ is the floor operation which outputs the greatest integer less than or equal to $m$.} To handle with such zero-padding in ConvNorm, based on the relationship between cross-correlation and circular convolution established in Appendix \ref{subsec:rela_conv}, we truncate the output of ConvNorm to make its spatial dimension align with the dimension of the input signal $\mb z_{in}$. More specifically, after Step $3$ in Appendix \ref{subsec:rela_conv}, the resulting $\wt{\mb z}_{out}$ has length $n+m-1$, in Step $4$ we then truncate the first and last $(m-1)/2$ elements from it to make it has length $n$ as the input signal. 




\subsection{Dealing with stride-$2$}\label{app:stride}
Stride nowadays becomes an essential component in modern ConvNets \cite{he2016deep, kong2017stride}. A stride-$s$ convolution is a convolution with the kernel moving $s$ steps at a time instead of $1$ step in a standard convolution shown in \Cref{fig:ConvRelation}. Mathematically, for the kernel $\mb a$ and the input $\mb z_{in}$, the stride-$s$ convolution can be written as
\begin{align*}
    \mb z_{out} = \mc D_s\brac{ \mb a \conv \mb z_{in} },
\end{align*}
where $\mc D_s[\cdot]$ is a downsampling operator that selects every $s$th sample and discards the rest. Therefore, the main purpose of stride is for downsampling the output in ConvNets, replacing classical pooling methods. Hence for convolution with stride-$s$, the dimension of the output decreases by $s$ times in comparison to that of the standard stride-$1$ output. For example, if we do a stride-2 convolution on \Cref{fig:ConvRelation} (ii), we will get the result $\mb y=[10]$ where the result is sampled from the standard stride-$1$ convolution output and its size is halved.


When enforcing weight regularizations, recent work often cannot handle strided convolution \cite{trockman2021orthogonalizing}. This happens because it involves weight matrix inversion, and the stride and the downsampling operator cause the weight matrix to be non-invertible. In contrast, since our method does not involve computing full matrix inversion and it operates on the outputs instead of directly changing the convolutional weights, we could first take a step back to perform an unstrided convolution, then use ConvNorm to normalize the output and finally do the stride (downsampling) operation on the normalized outputs. 

\subsection{Dealing with 2D kernels}\label{app:2d-kernel}
Although in the main body of the work, we introduced the ConvNorm based on 1D convolution for the simplicity of presentations, it should be noted that our approach  can be easily extended to the 2D case via 2D FFT. For an illustration, let us consider \eqref{eqn:Conv-Norm-k}, we know that in 1D case the preconditioning matrix for each channel can be written in the form of
\begin{align*}
    \mb P_k \;&=\; \paren{\sum_{j=1}^{C_I} \mb C_{\mb a_{kj}} \mb C_{\mb a_{kj}}^\top}^{-\frac{1}{2}} \\
    \;&=\; \paren{\sum_{j=1}^{C_I} \mb F^{*}\diag(\wh{\mb a}_{kj})\mb F \mb F^{*} \diag(\overline{\wh{\mb a}}_{kj})\mb F}^{-\frac{1}{2}} =\; \mb F^{*}\paren{ \paren{\sum_{j=1}^{C_I} \abs{\diag(\wh{\mb a}_{kj})}^{\odot 2}}^{\odot -\frac{1}{2}}}\mb F,
\end{align*}
so that the output after ConvNorm can be rewritten as,
\begin{align*}
    \mb P_k \mb z_k \;&=\; \mb F^{*}\paren{ \sum_{j=1}^{C_I} \abs{\diag(\wh{\mb a}_{kj})}^{\odot 2})^{\odot -\frac{1}{2}} } \mb F \mb z_k \;= \mb F^{-1}\paren{\sum_{j=1}^{C_I} \abs{\mb F(\mb a_{kj})}^{\odot 2}}^{\odot -\frac{1}{2}} \mb F(\mb z_k),
\end{align*}
where $\mb F(\cdot)$ and $\mb F^{-1}(\cdot)$ denote the 1D Fourier transform and the 1D inverse Fourier transform, respectively. To extend our method to the 2D case, we can simply replace the 1D Fourier transform in the above equation by the 2D Fourier transform. As summarized in \Cref{alg:algo_precondition}, to deal with 2D input data, we replace every 1D Fourier transform with 2D Fourier transform, which can be efficiently implemented via 2D FFT.



\section{Experimental details for \Cref{sec:exp}}\label{app:exp}

In this part of appendix, we provide detailed descriptions for the choices of hyperparameters of baseline models, and introduce the settings for all experiments conducted in \Cref{sec:exp}.

\subsection{Computing resources, assets license}
We use two datasets for the demonstration purpose of this paper: CIFAR dataset is made available under the terms of the MIT license and ImageNet dataset is publicly available for free to researchers for non-commercial use. We refer the code of some work during various stages of our implementation for comparison and training purposes, we list them as follows: the implementation of ONI~\cite{huang2020controllable} is made available under the BSD-2-Clause license; the implementation of OCNN~\cite{wang2019orthogonal} is made available under the MIT license; the training procedure for \Cref{table:adv_training} refers to the implementation of the work~\cite{shafahi2019adversarial} which is made available under the MIT license and the black-box attack SimBA~\cite{guo2019simple} implementation is made available under the MIT license. All experiments are conducted using RTX-8000 GPUs. 

\subsection{Choice of hyperparameters for baseline methods}
In  \Cref{sec:exp}, we compare our method with three representative normalization methods, that we describe the hyperparameter settings of each method below.
\begin{itemize}[leftmargin=*]
    \item \textbf{OCNN.} Since the best penalty constraint constant $\lambda$ for OCNN is not specified in \cite{wang2019orthogonal}, we do a hyperparameter tuning on the clean CIFAR-10 dataset for $\lambda \in \{0.001, 0.01, 0.05, 0.1, 1\}$ and picked $\lambda = 0.01$ from the best validation set accuracy.
    \item \textbf{ONI.} In the work \cite{huang2020controllable}, the authors utilize Newton's iteration to approximate the inverse of the covariance matrix for the reshaped weights. In our experiments, we adopt the implementation and use the default setting from their \href{https://github.com/huangleiBuaa/ONI/blob/master/ONI_PyTorch/extension/normalization/NormedConv.py}{Github page} where the maximum iteration number of Newton's method is set to $5$. We use a learning rate $0.01$ for all ONI experiments, where we notice that a large learning rate $0.1$ makes the training loss explode to NaN.
    \item \textbf{SN.} In \cite{miyato2018spectral}, the authors use the power method to estimate the spectral norm of the reshaped weight matrix and then utilize the spectral norm to rescale the weight tensors. For all SN experiments, we directly use the official \href{https://pytorch.org/docs/stable/generated/torch.nn.utils.spectral_norm.html}{PyTorch implementation} of SN with the default settings where the iteration number is set to $1$.\footnote{The authors of SN take advantage of the fact that the change of weights from each gradient update step is small in the SGD case (and thus the change of the singular vector is small as well) and hence design the SN algorithm so that the approximated singular vector from the previous step is reused as the initial vector in the current step. They notice that $1$ iteration is sufficient in the long run.}
\end{itemize}



\subsection{Experimental details for \Cref{subsec:exp-robustness}}




\paragraph{Robustness against adversarial attacks.} For gradient-based attacks, we follow the training procedure described in~\cite{shafahi2019adversarial} to train models with our ConvNorm and other baseline methods.\footnote{For OCNN in the gradient-based attack experiment, we choose $\lambda = 0.0001$ since we found that this setting yields the best OCNN robust performance.} Then we use Fast Gradient Sign Method (FGSM)~\cite{goodfellow2015explaining} and Projected Gradient Method (PGD)~\cite{madry2018towards} as metrics to measure the robust performance of the trained models. We note that the FGSM attack is defined to find adversarial examples in one iteration by:
\begin{align*}
    \mb x_{adv} = \mb x + \epsilon * \sign(\nabla_{\mb x} \ell(\mb x, \mb y, \mb \theta))
\end{align*}
where $\mb \theta$ represents the model; $\mb y$ is the target for data $\mb x$ and $\epsilon$ denotes the attack amount of this iteration. PGD attack is an iterative version of FGSM with random noise perturbation as attack initialization. In this paper, we use PGD-$k$ to denote the total iterative steps (i.e., $k$ steps) for the attack methods. Adversarial attacks are always governed by a bound on the norm of the maximum possible perturbation, i.e., $\norm{\mb x_{adv} - \mb x}{p} \leq \delta_p$. We use $\ell_{\infty}$ norm to constrain the attacks throughout this paper (i.e., $p = \infty $). Specifically, we adopt the procedure in~\cite{shafahi2019adversarial} by choosing $m=4$ (the times of repeating training for each minibatch) and FGSM step $\epsilon = \frac{8}{255}$ during training. And we set the attack bound $\delta_{\infty} = \frac{8}{255}$.

For the black-box attack SimBA~\cite{guo2019simple}, we first train ResNet18 \cite{he2016deep} models for ConvNorm and other baseline methods without BatchNorm on the clean CIFAR-10 training images using the default experimental setting mentioned in \Cref{sec:exp}.\footnote{We choose to not adding BatchNorm in the SimBA experiment because we empirically observe that removing BatchNorm improves the performance of every method.} Then we choose the best model for each method according to the best validation set accuracy. Finally, we apply each of the selected models on the held-out test set and randomly pick $1000$ correctly classified test samples for running SimBA attack. We compare the performances of the selected models with pixel attack using a step size $\eps = 0.4$. Since images in the dataset have spatial resolution $32 \times 32$ and $3$ color channels, the attack runs in a total of $3 \times 32 \times 32 = 3072$ iterations. We then report the average queries and attack success rate after all $3072$ iterations in \Cref{table:simba}.

\paragraph{Robustness against label noise.} The label noise for CIFAR-10 is generated by randomly flipping the original labels. Here we show the specific definition. We inject the symmetric label noise to training and validation split of CIFAR-10 to simulate noisily labeled dataset. The symmetric label noise is as follows:
\begin{equation*}
 \mathbf{y} = \left \{
\begin{array}{lr}
     \mathbf{y}^{GT} \text{ with the probability of $1-r$},\\
    \text{random one-hot vector with the probability of $r$},\\
\end{array}
\right.
\end{equation*}
where $r \in [0,1]$ is the noise level. 
The models (using ResNet18 as backbones) are trained on noisily labeled training set (45000 examples) under the default experimental setting mentioned in \Cref{sec:exp}. Max test accuracy is then reported on the held-out test set. 
\paragraph{Robustness against data scarcity} We randomly sample [10\%, 30\%, 50\%, 70\%] of the training data set CIFAR-10 dataset while keeping the amount of validation and test set amount unchanged. The model is trained on sub-sampled training set using the default experimental setting mentioned in \Cref{sec:exp}. We report the accuracy on the held out test set by evaluating the best model selected on the validation set (obtained by randomly sampling 10\% of the original training set).

\begin{figure}[t]
\begin{tabular}{>{\centering\arraybackslash}m{0.45\linewidth} >{\centering\arraybackslash}m{0.45\linewidth}}
    \quad\quad Train & \quad\quad Test  \\
    \includegraphics[width=1.1\linewidth]{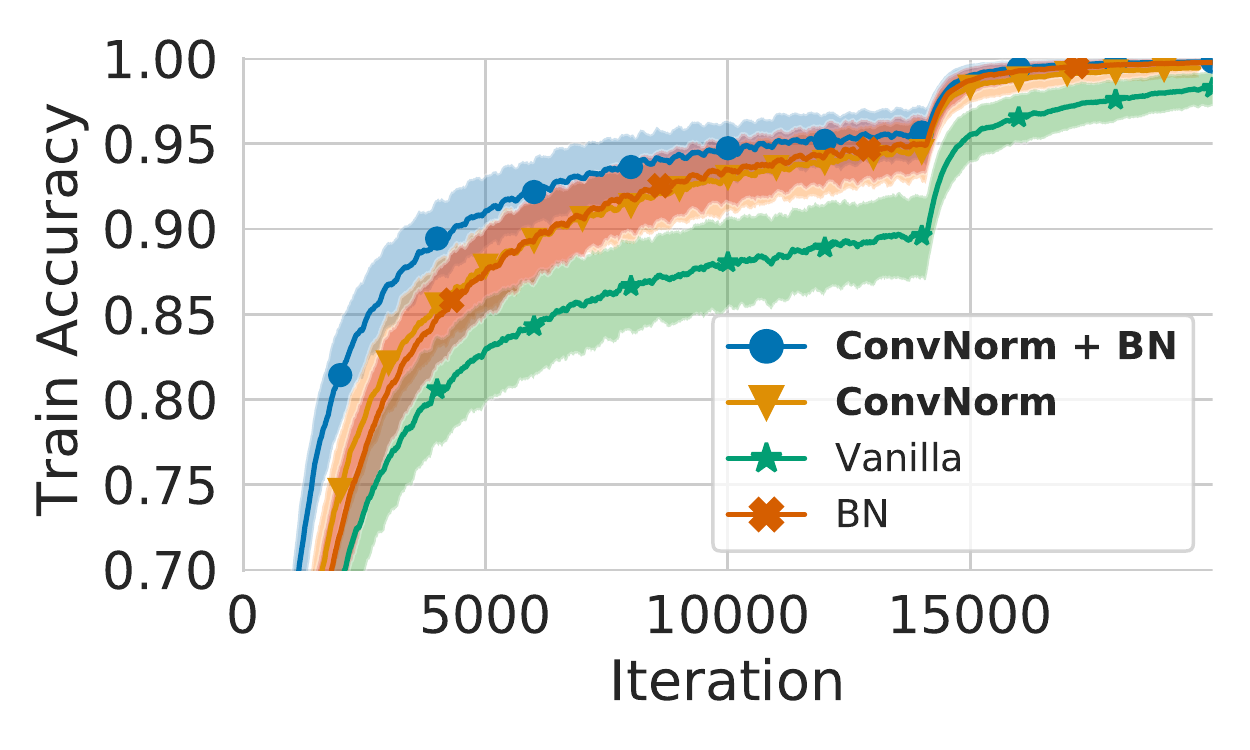} &
   \includegraphics[width=1.1\linewidth]{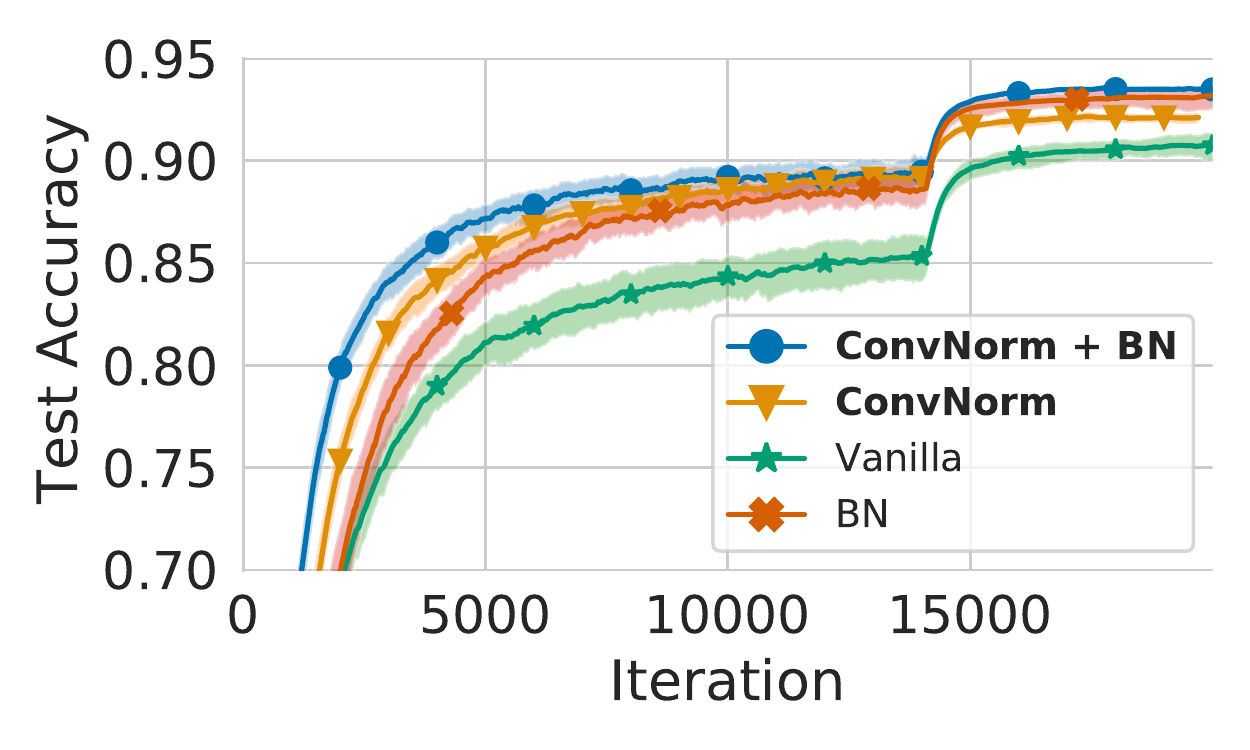} \\
    \end{tabular}
	\caption{\textbf{Adding ConvNorm before BatchNorm accelerates convergence and improves performance.} Train and test accuracy of ResNet18 trained on CIFAR-10 with and without ConvNorm or BatchNorm under default settings mentioned in~\Cref{sec:exp}. Error bars correspond to min/max over 3 runs.}
	\label{fig:ablation_bn}
	\vspace{-0.1in}
\end{figure}

\subsection{Generalization experiment and experimental details for \Cref{subsec:training}} \label{subsec:app_class}
\paragraph{Improved performances on supervised learning.}
Below we provide the generalization experiment and more detailed experiment settings for fast training and generalization mentioned in \Cref{subsec:training}. We use the default training setting mentioned in the \textbf{Setups of Dataset and Training} of \Cref{sec:exp} if not otherwise specified. 
\begin{itemize}[leftmargin=*]
    \item \emph{Faster training.} In order to isolate the effects of normalization techniques, we drop all regularization techniques including: data augmentations, weight decay, and learning rate decay as we have mentioned in the caption of ~\Cref{fig:compare-precond}. For extra experiments on the same analysis when these regularization techniques are included, please refer to Appendix~\ref{app:ablation} and the results in~\Cref{fig:ablation_bn}. 
    \item \emph{Better generalization.}
    We use the default setting as we have mentioned in the \textbf{Setups of Dataset and Training} of Section~\ref{sec:exp}. All normalization methods are evaluated under this setting. We demonstrate the test accuracy of our method on CIFAR and ImageNet under standard settings. As shown in \Cref{tab:generalization_results}, although only using ConvNorm results in slightly worse test accuracy against BatchNorm, adding ConvNorm before standard BatchNorm layers can boost the performance while maintaining fast convergence (see \Cref{fig:ablation_bn}). Additionally, we investigate the influence of combining the affine transform and BatchNorm with ConvNorm. Table~\ref{tab:ablation} shows the results of our ablation study on the CIFAR-10 dataset. Both affine transform and batch norm provide an independent performance boost.
\end{itemize}


\begin{table}[t]
\centering
\resizebox{0.96\linewidth}{!}{
	\begin{tabular}	{c c c c c c}
		\toprule	 	
			Dataset& Backbone& 
			 vanilla & BatchNorm(BN) & ConvNorm & ConvNorm + BN\\
			  \midrule			
	 CIFAR-10 & { ResNet18} &91.58 $\pm$ 0.67 & {93.18 $\pm$ 0.16} & 92.12 $\pm$ 0.32 & \textbf{93.31 $\pm$ 0.17}\\
       \midrule	
      CIFAR-100 & { ResNet18}& 66.59 $\pm$ 0.72 &{73.06 $\pm$ 0.13}  & 68.20 $\pm$ 0.27 & \textbf{73.38 $\pm$ 0.24}\\
		 \midrule	
     \multirow{1}{*}{ ImageNet} & { ResNet18}& / & {69.76}  & - & \textbf{70.34}\\

		\bottomrule
	\end{tabular}}
	\vspace{0.05in}
    \caption{\textbf{Results on classification.} Test accuracy on CIFAR-10, CIFAR-100, and ImageNet validation sets. For each case, we compare different combinations of ConvNorm and BatchNorm. Results of CIFAR-10 and CIFAR-100 are averaged over 4 random seeds, and "/" represents failed training.}
    \label{tab:generalization_results}
\end{table}	


\begin{table}[t]
\begin{center}
\resizebox{0.6\linewidth}{!}{
\begin{tabular}{c c c c}
\toprule
&  &  \multicolumn{2}{c}{Batch Norm}\\
\cmidrule{3-4}
&  & \cmark & \xmark\\
\midrule[0.7pt] \multirow{2}{*}{{Affine Transform}} & \cmark & 93.31 $\pm$ 0.17 & 92.12 $\pm$ 0.32\\
 \cmidrule{2-4}
 & \xmark & 93.18 $\pm$ 0.16 & 92.01 $\pm$ 0.21\\

\bottomrule
\end{tabular}}
\end{center}
\caption{\textbf{Ablation study.} The influence of the affine transform and batch normalization for classification on the CIFAR10 dataset is evaluated. The mean test accuracy and its standard deviation are computed over three random seeds.
}
\label{tab:ablation}
\vspace{-.2in}
\end{table}
    
\paragraph{Training details for GAN}
For GAN training, the parameter settings and model architectures for our method follow strictly with that in \cite{miyato2018spectral} and its official implementation for the training settings. More specifically, we use Adam ($\beta_1=0,\beta_2=0.9$) for the optimization with learning rate $0.0002$. We update the discriminator $5$ times per update of the generator. The batchsize is set to $64$. We adopt two performance measures, Inception score and FID to evaluate the images produced by the trained generators. The ConvNorm is added after every convolution layer in the discriminator of GAN.  

\section{Additional experiments and ablation study}\label{app:ablation}
In this section, we perform a more comprehensive ablation study to evaluate the influences of each additional component of ConvNorm on the tasks that we conducted in \Cref{sec:exp}. More specifically, we study the benefits of the extra convolutional affine transform that we introduced in \Cref{subsec:extra}, as well as 
an inclusion of a BatchNorm layer right after the ConvNorm. 



\paragraph{Fast training and better generalization.}
In~\Cref{fig:compare-precond}, we show that ConvNorm accelerates convergence and achieve better generalization performance with or without BatchNorm when regularizations such as data augmentation, weight decay, and learning rate decay are dropped during training. \Cref{fig:ablation_bn} shows that when these standard regularization techniques are added, fast convergence of ConvNorm can still be observed (see the blue curve with circles and the yellow curve with triangles). 


\begin{table}[t]
\begin{center}
\resizebox{0.8\linewidth}{!}{
\begin{tabular}{c c c c c}
\toprule
&  \multicolumn{4}{c}{Label Noise Ratio}\\
 \cmidrule{2-5}
& 20\% & 40\% & 60\% & 80\%\\
\midrule[0.7pt] 
ConvNorm + BN & 88.94 $\pm$ 0.36 & 85.88 $\pm$ 0.26 & 79.54 $\pm$ 0.73 & 69.26 $\pm$ 0.59 \\
ConvNorm & 87.75 $\pm$ 0.13 & 84.16 $\pm$ 0.71 & 77.48 $\pm$ 0.26 & 54.11 $\pm$ 2.65\\
BN & 86.98 $\pm$ 0.12 & 81.88 $\pm$ 0.29 & 74.14 $\pm$ 0.56 & 53.82 $\pm$ 1.04\\
Vanilla & 85.94 $\pm$ 0.25 & 82.11 $\pm$ 0.52 & 76.75 $\pm$ 0.20 & 57.20 $\pm$ 0.71\\

\bottomrule
\end{tabular}}
\end{center}
\caption{\textbf{Adding ConvNorm and BatchNorm together makes a network more robust to label noise.} The influence of BatchNorm and ConvNorm for label noise on the CIFAR-10 dataset is evaluated. The mean test accuracy and its standard deviation are computed over three random seeds.}
\label{tab:abl_ln}
\end{table}

\begin{table}[t]
\begin{center}
\resizebox{0.8\linewidth}{!}{
\begin{tabular}{c c c c c}
\toprule
&  \multicolumn{4}{c}{Subset Percent}\\
 \cmidrule{2-5}
& 10\% & 30\% & 50\% & 70\%\\
\midrule[0.7pt] 
ConvNorm + BN & 77.96 $\pm$ 0.11 & 87.66 $\pm$ 0.23 & 90.49 $\pm$ 0.17 & 90.71 $\pm$ 0.15 \\
ConvNorm & 69.23 $\pm$ 0.94 & 83.93 $\pm$ 0.34 & 87.83 $\pm$ 0.21 & 89.85 $\pm$ 0.10\\
BN & 67.10 $\pm$ 2.59 & 84.24 $\pm$ 0.51 & 88.74 $\pm$ 0.73 & 90.41 $\pm$ 0.33\\
Vanilla & 67.56 $\pm$ 0.50 & 81.98 $\pm$ 0.78 & 86.57 $\pm$ 0.35 & 87.61 $\pm$ 0.86\\

\bottomrule
\end{tabular}}
\end{center}
\caption{\textbf{Adding ConvNorm and BatchNorm together helps improve data efficiency} The influence of BatchNorm and ConvNorm for data scarcity on the CIFAR-10 dataset is evaluated. The mean test accuracy and its standard deviation are computed over three random seeds.}
\label{tab:abl_ds}
\end{table}

\paragraph{Robustness against label noise and data scarcity.} In \Cref{fig:data_scar_noise_label}, we show that adding a BatchNorm layer after the ConvNorm can further boost the performance against label noise and data scarcity compared with combining other baseline methods with BatchNorm. 

Here, to better understand the influence of each component, we study the effects of ConvNorm and BatchNorm separately. When we only use the ConvNorm without BatchNorm, from \Cref{tab:abl_ln} and \Cref{tab:abl_ds} we observe that in comparison to vanilla settings ConvNorm improves the performance against label noise and data scarcity for the most cases. In contrast, when only the BatchNorm is adopted, the performance downgrades that it improves upon the vanilla setting in some cases. Additionally, we notice that when we add $80\%$ of label noise to the training data, combining ConvNorm and BatchNorm together provides the best performance while using anyone alone would result in worse performance.




\paragraph{Comparasion with Cayley Transfrom \cite{trockman2021orthogonalizing}.} As mentioned in \Cref{sec:intro}, a very recent work \cite{trockman2021orthogonalizing} shares some common ideas with our work in terms of exploring convolutional structures in the Fourier domain. We note that the major difference between \cite{trockman2021orthogonalizing} and our work lies in the trade-off between the degree of orthogonality enforced and the associated computational burden. As shown in \Cref{tab:run_time}, we empirically compare the run time for training one epoch of CIFAR-10 dataset on a ResNet18 backbone using different methods. We observe that both our method and \cite{trockman2021orthogonalizing} requires more time to train compared with the vanilla network. But since our ConvNorm explores channel-wise orthogonalization instead of layer-wise as done in \cite{trockman2021orthogonalizing}, ConvNorm achieves faster training and \cite{trockman2021orthogonalizing} achieves more strict orthogonalization compared to each other. Also, we note that in terms of scalability, our ConvNorm could be adapted in larger networks such as ResNet50 and ResNet152, while the same experiments could not be carried on for \cite{trockman2021orthogonalizing} due to the limitation of our computational resources. Another important factor for comparison is the adversarial robustness. Based on our preliminary results, we found that ConvNorm has accuracy $46.12$ under PGD-10 attack, which outperforms the result of Cayley transform $38.35$ under the same attack. But we note that since the experiment settings in \cite{trockman2021orthogonalizing} are very different with ours, this comparison is not entirely fair as we have not done a comprehensive tuning for the Cayley transform method. We conjecture that with appropriate parameters and settings, the Cayley transform method could achieve on-par or even better results than ConvNorm since the more strict orthogonality enforced. 

\begin{table}[t]
\begin{center}
\resizebox{0.8\linewidth}{!}{
\begin{tabular}{c c c c}
\toprule

& Vanilla & ConvNorm & Cayley transfrom \\
\midrule[0.7pt] 
Training time (epoch) & 21s & 60s & 182s\\

\bottomrule
\end{tabular}}
\end{center}
\caption{\textbf{Training time per epoch for different methods} The average training time for one epoch of different weight normalization methods is evaluated. Experiments are conducted CIFAR-10 dataset with a ResNet18 backbone.}
\label{tab:run_time}
\end{table}

\paragraph{Layer-wise condition number.} In \Cref{fig:Lipschitz}, we have shown that ConvNorm could improve the channel-wise condition number and layer-wise spectral norm. In this section, we empirically compare the layer-wise condition number of different normalization methods. We note that we use the method described in \cite{sedghi2018singular} to estimate the singular values and condition numbers of the actual convolution operators from each layer, not the weight matrix. Here, we define a metric $\rho$ to quantify the average ratio of the condition number of the vanilla method 
and other methods
\begin{align*}
    \rho := \frac{1}{L} \sum_{l=1}^L \frac{\text{Condition number (Vanilla)}}{\text{Condition number (Method}_j})
\end{align*}
for characterizing the improvement upon the vanilla method (the larger, the better). From \Cref{tab:layer_con}, we observe that our ConvNorm shows the best result in terms of improvement of layer-wise condition number as compared with other methods. We note that we did not compare with Cayley transform \cite{trockman2021orthogonalizing} because it inherently enforces more strict orthogonality than our ConvNorm based on their experiments, so we conjecture that Cayley transform could have better condition number than our ConvNorm.

\begin{table}[t]
\begin{center}
\resizebox{0.5\linewidth}{!}{
\begin{tabular}{c c c c c}
\toprule

& SN & ONI & OCNN & ConvNorm \\
\midrule[0.7pt] 
$\rho$ & 2.724 & 0.001 & 2.288 & 3.332\\

\bottomrule
\end{tabular}}
\end{center}
\caption{\textbf{Average layer-wise condition number ratio of vanilla method on top of other methods} The experiments are conducted on natural settings with the same set of hyperparameter of \Cref{table:adv_training}.}
\label{tab:layer_con}
\end{table}

\end{document}